%% file: main.tex
\documentclass[sigconf]{acmart}
\usepackage{hyperref}
\usepackage{url}
\usepackage{multirow}
\usepackage{graphicx}
\usepackage{subcaption}
\usepackage{caption}
\usepackage[normalem]{ulem}
\usepackage{amsmath}
\usepackage{amsthm}
\usepackage[ruled,vlined]{algorithm2e}
\usepackage{algorithmic}
\usepackage{bm}
\usepackage{float}

\usepackage{tikz}
\usetikzlibrary{shapes, arrows.meta, positioning, calc, shadows.blur, patterns}
\usepackage{pgfplots}
\pgfplotsset{compat=1.18}

\AtBeginDocument{%
  }

\copyrightyear{2026}
\acmYear{2026}
\setcopyright{cc}
\setcctype{by}
\acmConference[KDD '26]{Proceedings of the 32nd ACM SIGKDD Conference on Knowledge Discovery and Data Mining V.2}{August 09--13, 2026}{Jeju Island, Republic of Korea}
\acmBooktitle{Proceedings of the 32nd ACM SIGKDD Conference on Knowledge Discovery and Data Mining V.2 (KDD '26), August 09--13, 2026, Jeju Island, Republic of Korea}
\acmDOI{10.1145/3770855.3817983}
\acmISBN{979-8-4007-2259-2/2026/08}

\begin{document}

\title{Defending against Model Extraction for GNNs with Model Reprogramming}

\author{Yan Wen}
\orcid{0009-0002-6425-5056}
\affiliation{%
  \institution{University of Maryland, College Park}
  \department{Department of Computer Science}
  \city{College Park}
  \state{MD}
  \country{USA}
}
\email{ywen1@umd.edu}

\author{Zhenyi Wang}
\orcid{0000-0002-2780-9446}
\affiliation{%
  \institution{University of Central Florida}
  \department{Department of Computer Science and Institute of Artificial Intelligence}
  \city{Orlando}
  \state{FL}
  \country{USA}
}
\email{zhenyi.wang@ucf.edu}

\author{Heng Huang}
\orcid{0000-0002-3483-8333}
\affiliation{%
  \institution{University of Maryland, College Park}
  \department{Department of Computer Science}
  \city{College Park}
  \state{MD}
  \country{USA}
}
\email{heng@umd.edu}

\renewcommand{\shortauthors}{Yan Wen, Zhenyi Wang, and Heng Huang}

\begin{CCSXML}
<ccs2012>
   <concept>
       <concept_id>10002978.10003022</concept_id>
       <concept_desc>Security and privacy~Software and application security</concept_desc>
       <concept_significance>500</concept_significance>
       </concept>
   <concept>
       <concept_id>10010147.10010257</concept_id>
       <concept_desc>Computing methodologies~Machine learning</concept_desc>
       <concept_significance>500</concept_significance>
       </concept>
 </ccs2012>
\end{CCSXML}

\ccsdesc[500]{Security and privacy~Software and application security}
\ccsdesc[500]{Computing methodologies~Machine learning}

\keywords{Model Extraction; Graph Neural Networks; Model Reprogramming; Adversarial Defense; Trustworthy AI}

\begin{abstract}
Graph Neural Networks (GNNs) serve as the backbone for high-stakes applications in Machine-Learning-as-a-Service (MLaaS). Still, their black-box deployment exposes them to Model Extraction (ME) attacks, in which adversaries steal intellectual property by querying APIs. Existing defenses suffer from a critical ``Euclidean bias'': they transfer image-based strategies (e.g., random noise) to graphs, ignoring the complex topological dependencies between nodes, which often results in severe utility degradation. Passive methods like watermarking also fail to prevent theft in real time. To bridge this gap, we propose \textbf{GraphRP} (\textbf{Graph} \textbf{R}eprogramming \textbf{P}rotection), a proactive defense framework that repurposes \textit{Model Reprogramming} for security. Unlike static perturbations, GraphRP introduces a Structure-Aware Gating Mechanism driven by learnable topological prototypes. This creates a dynamic ``structural firewall'' that selectively modulates the model's decision boundary: it preserves fidelity for benign queries residing on the training manifold, while maximizing the Fisher Information along the perturbation direction for adversarial queries. Under standard assumptions (bounded loss, optimal attacker, and local second-order approximation), we prove a lower bound on the attacker's estimation error that increases with the structural sensitivity of the reprogramming noise. Extensive experiments on both hard-label and soft-label ME attacks demonstrate that GraphRP significantly degrades attack effectiveness while preserving benign utility.
\end{abstract}

\maketitle
\input{1-introduction}
\input{2-relatedworks}
\input{3-preliminary}
\input{4-method}
\input{5-analysis}
\input{6-experiments}
\input{7-conclusions}


\begin{acks}
This work was partially supported by NSF IIS-2347592, IIS-2348169, DBI-2405416, CCF-2348306, CNS-2347617, and RISE-2536663.
\end{acks}

\clearpage
\bibliographystyle{ACM-Reference-Format}
\balance
\bibliography{sample-base}
\clearpage

\appendix
\input{appendix}
\clearpage

\end{document}

%% file: 1-introduction.tex
\section{Introduction}
\label{sec:intro}

Graph Neural Networks (GNNs) \cite{kipf2016semi, hamilton2017inductive, velivckovic2017graph} have emerged as the de facto standard for modeling non-Euclidean data, serving as the backbone for high-stakes applications in recommendation systems \cite{wu2022graph}, financial fraud detection \cite{liu2021pick}, and social network analysis \cite{li2023survey}. 
Given the high costs associated with data collection and model training, these proprietary models are increasingly deployed as cloud-based API services, a paradigm known as \textit{Machine-Learning-as-a-Service} (MLaaS) \cite{liu2023unlearnable, long2022pre}. 
However, this black-box access exposes them to \textbf{Model Extraction (ME) attacks}, where an adversary systematically queries the API to train a surrogate model that replicates the victim's functionality \cite{shen2022model, wu2022model, zhuang2024unveiling}. 
Unlike attacks on independent and identically distributed (i.i.d.) image data, ME attacks on GNNs exploit the complex topological dependencies between nodes, allowing attackers to infer global graph properties from limited queries. This renders GNNs uniquely vulnerable to intellectual property (IP) theft and subsequent adversarial exploitation \cite{defazio2019adversarial}. 

While prior work has made progress in defending against model extraction, existing solutions have critical limitations when applied to GNNs.
First, \textbf{passive defenses}, such as watermarking \cite{downer2025watermarking, jia2021entangled}, focus on ownership verification rather than prevention. These methods can only identify a stolen model \textit{after} the theft has occurred, by which time the intellectual property is already compromised.
Second, \textbf{active defenses} transferred from the image domain fail to account for the specific properties of graph data. Most existing active methods attempt to confuse attackers by adding random noise or performing complex calculations during inference \cite{orekondy2019prediction, kariyappa2020defending, mazeika2022steer, wang2023defending}. 
However, these approaches face two major problems with graphs: 
(1) \textit{Utility Degradation}: Unlike pixels in an image, nodes in a graph are connected. Adding indiscriminate noise to a single node propagates errors to its neighbors during message passing, severely harming the model's accuracy for benign users. 
(2) \textit{High Latency}: Many defense methods require expensive optimization steps for every query. For real-world web applications like recommendation systems, this extra delay is unacceptable.

These challenges lead us to the central research question of this work:
\textit{\textbf{RQ:} How can we design an active defense for GNNs that effectively prevents model extraction without hurting the accuracy for benign users, while keeping inference fast?}

To address this, we propose \textbf{GraphRP} (\textbf{Graph} \textbf{R}eprogramming \textbf{P}rotection), a novel active defense mechanism based on \textbf{Model Reprogramming} \cite{chen2024model, jing2023deep}. 
While reprogramming is traditionally used to adapt pre-trained models to new tasks, we repurpose it for security. 
Our key insight is to treat defense as a "conditional task": the model should behave normally for benign queries but essentially "malfunction" for adversarial ones.
Unlike random noise injection, GraphRP optimizes a set of \textbf{learnable, layer-wise perturbations} that act as a "structural firewall."
Crucially, we introduce a \textbf{Structure-Aware Gating Mechanism} that dynamically modulates these perturbations based on the topological signature of the input graph. 
By learning "benign structural prototypes," our method ensures that defensive noise is only activated when the query distribution deviates from the benign distribution (OOD), thereby preserving utility for benign users.

In summary, our main contributions are fourfold:
\begin{itemize}
    \item \textbf{Framework:} We propose GraphRP, the first defense framework that leverages model reprogramming to protect GNNs against model extraction, avoiding the need for full model retraining.
    \item \textbf{Structure-Awareness:} We introduce a prototype-based gating mechanism that exploits graph topological invariants (e.g., spectral features) to distinguish benign queries from extraction attacks, solving the utility-defense trade-off.
    \item \textbf{Theoretical Guarantee:} Under standard assumptions (bounded loss, an optimal attacker, and a local second-order approximation), we show that our method maximizes a lower bound on the attacker's estimation error by exploiting the Fisher Information of the target GNN.
    \item \textbf{Empirical Effectiveness:} Extensive experiments on diverse benchmarks demonstrate that GraphRP reduces clone model accuracy by up to $17\%$ compared to state-of-the-art baselines while maintaining high utility and low inference latency.
\end{itemize}

%% file: 2-relatedworks.tex
\section{Related Work}

\subsection{Model Extraction}
\textbf{Model Extraction (ME)}, also referred to as \textit{model stealing}, denotes the adversarial process of replicating the functionality of a target model by extracting its parameters or approximating its decision boundary \cite{orekondy2019knockoff, papernot2017practical, truong2021data, oliynyk2023know}.
Attackers may target exact parameters \cite{reith2019efficiently}, hyperparameters \cite{wang2018stealing}, or neural architectures \cite{oh2019towards}; this work, however, focuses on \textit{behavioral cloning}—training a surrogate that mimics the target's outputs without accessing its internals \cite{defazio2019adversarial}.

ME attacks are conventionally categorized based on the data available to the adversary. \textbf{Data-Based Model Extraction (DBME)} \cite{kariyappa2021maze, correia2018copycat, papernot2017practical} assumes the attacker possesses a dataset distributionally similar to the victim's training data. By querying the target model with these inputs, the attacker collects labeled pairs to train the surrogate in a supervised manner. Conversely, \textbf{Data-Free Model Extraction (DFME)} \cite{kariyappa2021maze, truong2021data} represents a more sophisticated threat where no real data is available. In this scenario, the attacker must generate synthetic queries to simultaneously explore the high-dimensional input space and learn the model's behavior, often utilizing generative models to maximize query efficiency.

The difficulty of extraction is further dictated by the granularity of the API's output. In \textbf{soft-label attacks}, the victim returns a full probability distribution (logits), providing rich information about the decision boundary \cite{hinton2015distilling}. In contrast, \textbf{hard-label attacks} return only the top-1 class prediction. This setting significantly restricts the information gain per query, forcing attackers to rely on label-only learning techniques that are typically less sample-efficient and harder to optimize.

Recent research has demonstrated that GNNs are uniquely vulnerable to these attacks due to the leakage of structural information \cite{defazio2019adversarial, wu2022model, shen2022model, zhuang2024unveiling}. Unlike image models, GNN extraction reconstructs not only node labels but also the underlying graph topology. \citet{wu2022model} proposed using discrete graph structure learning to infer a substitute graph from node attributes, while \citet{shen2022model} utilized k-Nearest Neighbors (kNN) to initialize surrogate structures. More recently, \citet{zhuang2024unveiling} proposed StealGNN, a data-free extraction attack for GNNs that leverages a generative model to synthesize queries without access to any real graph data, demonstrating that structural information leaks even under strict black-box constraints. Furthermore, as the field moves toward large-scale pre-training, \citet{xu2025systematic} have shown that even Graph Foundation Models are susceptible to extraction, emphasizing the urgent need for robust defense mechanisms in the graph domain.

\subsection{Model Extraction Defense}

Existing defenses against model extraction can be broadly categorized into passive and active strategies, depending on whether they attempt to detect the attack or prevent it entirely. 

\textbf{Passive Defenses} focus on monitoring and detecting extraction attempts without modifying the model's inference behavior \cite{jia2021entangled, szyller2021dawn, maini2021dataset}. These methods typically analyze query patterns, distribution shifts, and timing to identify suspicious activities distinct from normal user behavior. Standard techniques involve sophisticated logging and anomaly detection pipelines. In the specific context of GNNs, recent approaches, such as those by \citet{downer2025watermarking}, have proposed embedding watermarks directly into GNN explanations to verify ownership. While they provide evidence for legal recourse after the fact, they fail to prevent intellectual property theft in the first place.

\textbf{Active Defenses}, in contrast, aim to prevent model extraction (ME) attacks by proactively modifying the model's responses or access mechanisms \cite{orekondy2019prediction, kariyappa2020defending, kariyappa2021protecting, mazeika2022steer, wang2023defending}. Primary strategies include \textit{Prediction Obfuscation} \cite{orekondy2019prediction}, which alters output logits to reduce the information gain per query, and \textit{Perturbation Techniques} \cite{kariyappa2020defending, kariyappa2021protecting, wang2023defending}, which introduce stochastic noise to degrade the fidelity of the extracted surrogate. Other approaches, such as \textit{Query Limitation} \cite{mazeika2022steer}, restrict the rate of queries or dynamically adjust responses based on behavior analysis \cite{wang2023defending, zhuang2024unveiling}.

Despite these advancements, designing defenses for GNNs presents unique challenges that existing methods fail to address. We term this failure mode the \textit{Euclidean bias}. Graph data is non-Euclidean, containing complex dependencies between node features and topological structures. Most existing active defenses are designed for independent and identically distributed (i.i.d.) image data; when applied to graphs, they often disrupt the message-passing mechanism, leading to severe utility degradation for benign users. A recent work, ADAGE \cite{xu2025adage}, takes a step toward GNN-specific active defense by monitoring the diversity of incoming queries across sessions and reactively blocking requests that exhibit extraction-like patterns. However, ADAGE operates in a \textit{stateful} manner, requiring persistent query-log monitoring across multiple API calls, which introduces significant infrastructure overhead and makes it inapplicable in truly stateless black-box deployments. Furthermore, its reactive blocking strategy depends on accumulating sufficient query history before a defense is triggered, leaving an exploitable window for early-stage attacks. In contrast, GraphRP is entirely \textit{stateless}: it requires no query logging or cross-session state, and proactively poisons every OOD query at inference time based solely on the structural signature of the current input. These fundamental architectural differences make direct experimental comparison between ADAGE and GraphRP infeasible, as they operate under incompatible deployment assumptions.

Our work bridges this gap by proposing a novel defense pipeline that emphasizes graph-structure sensitivity. We introduce adaptive, layer-wise perturbations that exploit the unique construction of graph-based models. Unlike prior works, we leverage \textbf{Model Reprogramming} as an active defense mechanism. To the best of our knowledge, this is the first study to explore Model Reprogramming specifically to defend against GNN model extraction, offering a proactive solution that reduces surrogate fidelity while preserving benign utility.

\subsection{Model Reprogramming}

\textbf{Model Reprogramming (MR)} is a parameter-efficient learning paradigm that repurposes pre-trained machine learning models for target tasks in different domains without modifying the original model weights \cite{chen2024model}. Initially introduced in the context of adversarial machine learning \cite{elsayed2018adversarial}, MR demonstrates that a fixed model can be "steered" to perform a new function simply by learning a transformation function (or perturbation mask) applied to the input data. This approach has since evolved into a powerful tool for cross-domain adaptation, offering a resource-efficient alternative to traditional transfer learning, where full model fine-tuning is computationally prohibitive.

In the graph domain, MR has gained significant traction under the umbrella of "Graph Prompting" and "Deep Graph Reprogramming." \citet{jing2023deep} proposed \textit{Deep Graph Reprogramming}, which aligns pre-trained GNNs with novel downstream tasks by optimizing a set of learnable structural perturbations. Similarly, \citet{sun2023all} demonstrated that reprogramming-based prompting can unify multi-task learning on graphs by modifying the input graph topology and features rather than the model architecture. 

Our work introduces a paradigm shift in the application of MR. While prior works \cite{jing2023deep, sun2023all} introduce reprogramming to enhance model utility for new tasks, we leverage it as a defensive mechanism. We reprogram the victim GNN to maintain utility for the original task (benign users) while simultaneously degrading utility for the specific task of "model extraction" (adversaries), effectively turning the model's plasticity into a security feature.

\textbf{Distinction from Watermarking.} 
It is crucial to distinguish GraphRP from watermarking techniques \cite{jia2021entangled}. 
Watermarking is a \textit{passive, post-hoc} mechanism that embeds a signature into the model to verify ownership \textit{after} theft has occurred. It does not prevent the attacker from using the stolen model. 
In contrast, GraphRP is an \textit{active, real-time} defense. It fundamentally degrades the quality of the extracted model, rendering the stolen copy useless. 
While watermarking seeks legal recourse, GraphRP seeks to render the surrogate functionally obsolete.

%% file: 3-preliminary.tex

\begin{figure*}[t]
    \centering
    \tikzset{
        font=\sffamily\normalsize,
        >=LaTeX,
        packet/.style={draw=black!50, thick, fill=white, rounded corners=2pt, minimum size=1.5em, blur shadow},
        model/.style={draw=blue!80!black, thick, fill=blue!5, rounded corners=4pt, minimum height=3em, minimum width=4em, align=center},
        noise/.style={draw=red!80!black, thick, fill=red!10, dashed, rounded corners=3pt, align=center},
        module/.style={draw=orange!80!black, thick, fill=orange!5, rounded corners=3pt, align=center},
        node_benign/.style={circle, fill=green!60!black, inner sep=1.5pt},
        node_attack/.style={circle, fill=red!60!black, inner sep=1.5pt},
        edge_style/.style={thin, gray},
        bar_base/.style={ybar, bar width=4pt, draw=none, align=center},
    }

    \begin{subfigure}[b]{0.48\textwidth}
        \centering
        \begin{tikzpicture}[node distance=0.8cm and 0.5cm]
            
            \node[font=\bfseries] (label_benign) at (-2, 1.2) {Benign Query};
            
            \node (g_benign) at (-2, 0) {
                \tikz[scale=0.50, baseline=-0.5ex]{
                    \node[node_benign] (a) at (0,0.5) {}; \node[node_benign] (b) at (-0.5,-0.3) {};
                    \node[node_benign] (c) at (0.5,-0.3) {}; \node[node_benign] (d) at (0, -0.6) {};
                    \draw[edge_style] (a)--(b)--(c)--(a) (b)--(d)--(c);
                }
            };
            \node[below=0.1cm of g_benign, font=\normalsize, color=gray] {$(x,y) \sim \mathcal{D}_{id}$};

            \node[model, right=1.2cm of g_benign] (gnn_benign) {Standard\\GNN};
            
            \node[right=1.2cm of gnn_benign] (out_benign) {
                \tikz[scale=0.40, baseline=0.5ex]{
                    \draw[->] (0,0) -- (3.5,0); \draw[->] (0,0) -- (0,2.5);
                    \draw[fill=orange] (0.5,0) rectangle (1.0, 2.0); 
                    \draw[fill=gray!50] (1.5,0) rectangle (2.0, 0.3);
                    \draw[fill=gray!50] (2.5,0) rectangle (3.0, 0.2);
                }
            };
            
            \draw[->, thick] (g_benign) -- (gnn_benign);
            \draw[->, thick] (gnn_benign) -- (out_benign);

            \draw[dashed, gray!30] (-3, -1.2) -- (4, -1.2);

            \node[font=\bfseries] (label_attack) at (-2, -1.8) {Attack Query};

            \node (g_attack) at (-2, -3.0) {
                \tikz[scale=0.50, baseline=-0.5ex]{
                    \node[node_attack] (a) at (0,0) {}; \node[node_attack] (b) at (0.6,0.3) {};
                    \node[node_attack] (c) at (-0.6,0.2) {}; \node[node_attack] (d) at (0,-0.6) {};
                    \draw[edge_style] (a)--(b) (a)--(c) (a)--(d); 
                }
            };
            \node[below=0.1cm of g_attack, font=\normalsize, color=gray] {$(x,y) \sim \mathcal{D}_{ood}$};

            \node[model, right=1.2cm of g_attack] (gnn_attack) {Standard\\GNN};

            \node[right=1.2cm of gnn_attack] (out_attack) {
                \tikz[scale=0.40, baseline=0.5ex]{
                    \draw[->] (0,0) -- (3.5,0); \draw[->] (0,0) -- (0,2.5);
                    \draw[fill=blue] (0.5,0) rectangle (1.0, 1.7); 
                    \draw[fill=gray!50] (1.5,0) rectangle (2.0, 0.5);
                    \draw[fill=gray!50] (2.5,0) rectangle (3.0, 0.3);
                }
            };

            \draw[->, thick] (g_attack) -- (gnn_attack);
            \draw[->, thick] (gnn_attack) --  node[above, font=\footnotesize, red, xshift=0.05cm] {\shortstack{High\\Confidence!}}   (out_attack);

        \end{tikzpicture}
    \end{subfigure}
    \hfill
    \begin{subfigure}[b]{0.48\textwidth}
        \centering
        \begin{tikzpicture}[node distance=0.8cm and 0.5cm]

            \node[font=\bfseries] (label_benign_b) at (-2, 1.2) {Benign Query};

            \node (g_benign_b) at (-2, 0) {
                \tikz[scale=0.50, baseline=-0.5ex]{
                    \node[node_benign] (a) at (0,0.5) {}; \node[node_benign] (b) at (-0.5,-0.3) {};
                    \node[node_benign] (c) at (0.5,-0.3) {}; \node[node_benign] (d) at (0, -0.6) {};
                    \draw[edge_style] (a)--(b)--(c)--(a) (b)--(d)--(c);
                }
            };
            
            \node[module, right=0.4cm of g_benign_b, scale=0.85] (struct_benign) {Struct.\\Check};
            \node[below=0.1cm of g_benign_b, font=\normalsize, color=gray] {$(x,y) \sim \mathcal{D}_{id}$};
            
            \node[draw, circle, inner sep=1pt, right=0.3cm of struct_benign, fill=green!10] (gate_benign) {\normalsize $\alpha{\approx}0$};
            
            \node[model, right=0.3cm of gate_benign, align=center] (gnn_benign_b) {Reprogram\\Layer};

            \node[right=0.7cm of gnn_benign_b] (out_benign_b) {
                \tikz[scale=0.40, baseline=0.5ex]{
                    \draw[->] (0,0) -- (3.5,0); \draw[->] (0,0) -- (0,2.5);
                    \draw[fill=orange] (0.5,0) rectangle (1.0, 2.0); 
                    \draw[fill=gray!50] (1.5,0) rectangle (2.0, 0.3);
                    \draw[fill=gray!50] (2.5,0) rectangle (3.0, 0.2);
                }
            };

            \draw[->] (g_benign_b) -- (struct_benign);
            \draw[->] (struct_benign) -- (gate_benign);
            \draw[->] (gate_benign) -- (gnn_benign_b);
            \draw[->, thick] (gnn_benign_b) -- (out_benign_b);
            
            \node[above=0.1cm of gnn_benign_b, font=\normalsize, color=gray] {Noise $\times 0$};

            \draw[dashed, gray!30] (-3, -1.2) -- (4, -1.2);

            \node[font=\bfseries] (label_attack_b) at (-2, -1.8) {Attack Query};

            \node (g_attack_b) at (-2, -3.0) {
                \tikz[scale=0.50, baseline=-0.5ex]{
                    \node[node_attack] (a) at (0,0) {}; \node[node_attack] (b) at (0.6,0.3) {};
                    \node[node_attack] (c) at (-0.6,0.2) {}; \node[node_attack] (d) at (0,-0.6) {};
                    \draw[edge_style] (a)--(b) (a)--(c) (a)--(d);
                }
            };
            \node[below=0.1cm of g_attack_b, font=\normalsize, color=gray] {$(x,y) \sim \mathcal{D}_{ood}$};

            \node[module, right=0.4cm of g_attack_b, scale=0.85, font=\normalsize] (struct_attack) {Struct.\\Check};

            \node[draw, circle, thick, inner sep=1pt, right=0.3cm of struct_attack, fill=red!20, draw=red] (gate_attack) {\normalsize $\alpha{\approx}1$};

            \node[model, right=0.3cm of gate_attack, fill=red!5, draw=red!80] (gnn_attack_b) {Reprogram\\Layer};

            \node[right=0.7cm of gnn_attack_b] (out_attack_b) {
                \tikz[scale=0.40, baseline=0.5ex]{
                    \draw[->] (0,0) -- (3.5,0); \draw[->] (0,0) -- (0,2.5);
                    \draw[fill=gray] (0.5,0) rectangle (1.0, 0.8); 
                    \draw[fill=gray] (1.5,0) rectangle (2.0, 0.9);
                    \draw[fill=gray] (2.5,0) rectangle (3.0, 0.7);
                }
            };

            \draw[->] (g_attack_b) -- (struct_attack);
            \draw[->] (struct_attack) -- (gate_attack);
            \draw[->, thick, red] (gate_attack) -- (gnn_attack_b);
            \draw[->, thick] (gnn_attack_b) -- node[above, font=\footnotesize, red, xshift=0.05cm, yshift=0.05cm] {\shortstack{Uniform \\Dist.}} (out_attack_b);

            \node[above=0.1cm of gnn_attack_b, font=\normalsize, color=red] {Noise $\times 1$};

        \end{tikzpicture}
    \end{subfigure}
    
    \caption{\textbf{Overview of the Proposed Defense Framework.} Comparison between a standard GNN pipeline and our approach. While the standard GNN pipeline (left) leaks high-confidence information for both benign and attack queries, GraphRP (right) dynamically modulates the output based on structural legitimacy. The top row shows utility preservation for benign users (orange bars), while the bottom row demonstrates defense activation for OOD attackers (grey/flat bars).}
    \label{fig:pipeline}
\end{figure*}

\section{Preliminaries}
\subsection{Notations and Problem Setup}

\noindent\textbf{Graph Notation.} Let $G = (V, E)$ denote a graph, where $V$ is the set of $n$ nodes (vertices) and $E$ is the set of edges. A graph sample can be represented as a tuple $x = (A, X)$, where $A \in \{0, 1\}^{n \times n}$ is the adjacency matrix representing the graph structure, and $X \in \mathbb{R}^{n \times d}$ is the node feature matrix, with $d$ denoting the feature dimension.

\noindent\textbf{Data Distributions.} We consider a graph classification task where data is sampled from a ground-truth distribution $\mathcal{D}$. A dataset is denoted as $D = \{(x_i, y_i)\}_{i=1}^N$, where $y_i \in Y$ is the label. 
Crucially for Model Extraction (ME) defense, we distinguish between two data regimes:
\begin{itemize}
    \item \textbf{In-Distribution (ID):} The private distribution $\mathcal{D}_{id}$ on which the victim model is trained and expected to perform well. The defender possesses a private dataset $D_{train} \sim \mathcal{D}_{id}$.
    \item \textbf{Out-of-Distribution (OOD):} The public or synthetic distribution $\mathcal{D}_{ood}$ used by the attacker to query the model. This includes surrogate datasets (in Data-Based ME) or synthetic graphs generated by a generator (in Data-Free ME).
\end{itemize}
In our setting, we assume the query data $x_q$ comes from $\mathcal{D}_{ood}$, such that $\mathcal{D}_{id} \cap \mathcal{D}_{ood} = \emptyset$ in terms of exact samples, though they may share feature semantics.

\subsection{Model Framework}

\noindent\textbf{Victim Model (Target).} The defender deploys a pre-trained Graph Neural Network, denoted as $T: \mathcal{G} \to \mathcal{Y}$, parameterized by $\theta_T$. This model maps an input graph $x \in \mathcal{G}$ to a probability distribution over classes $Y$. We assume $T$ is well-trained on the private distribution $\mathcal{D}_{id}$ and serves as an oracle $\mathcal{O}$ accessible via an API.

\noindent\textbf{Surrogate Model (Clone).} The attacker aims to train a surrogate model $C: \mathcal{G} \to \mathcal{Y}$, parameterized by $\theta_C$. The architecture of $C$ is generally unknown to the defender and may differ significantly from $T$. The attacker's objective is to estimate $\theta_C$ such that $C(x) \approx T(x)$ for all $x$ in the domain of interest with similar functionality.

\noindent\textbf{Query Generator.} In the Data-Free Model Extraction (DFME) setting, the attacker utilizes a generator $f_{gen}: \mathcal{Z} \to \mathcal{G}_{ood}$ to synthesize query graphs. Here, $\mathcal{Z}$ represents a latent space (e.g., Gaussian noise), and $f_{gen}$ constructs node features and adjacency matrices that are structurally valid but distributionally distinct from the private data $\mathcal{D}_{id}$.

\subsection{Attacker's Goal and Knowledge}

\noindent\textbf{Attacker's Knowledge.} We assume a black-box setting where the attacker has no access to the victim model's parameters $\theta_T$, gradients, or the private training data $\mathcal{D}_{id}$. The attacker can only query the API with inputs $x_q \sim \mathcal{D}_{ood}$ and observe the output $y_q$, as illustrated in Figure~\ref{fig:pipeline}. 
In the \textit{soft-label} setting, $y_q$ is a full probability vector; in the \textit{hard-label} setting, $y_q$ is the top-1 predicted class.

\noindent\textbf{Optimization Objective.} The attacker aims to train a surrogate model $C(\cdot; \theta_C)$ that mimics the behavior of the target model $T$. Since the attacker lacks access to $\mathcal{D}_{id}$, they minimize the divergence between the two models over the available query distribution $\mathcal{D}_{ood}$. Formally, the attacker minimizes the Kullback-Leibler (KL) divergence:
\begin{align}
    \label{eqn:attacker_obj}
    \min_{\theta_C} \ \mathbb{E}_{x \sim \mathcal{D}_{ood}} \left[ D_{\text{KL}} \left( T(x; \theta_T) \parallel C(x; \theta_C) \right) \right],
\end{align}
where $T(x; \theta_T)$ represents the soft labels returned by the victim (or approximated from hard labels). By minimizing this loss, the surrogate learns to replicate the decision boundary of the victim on the query set.

\noindent\textbf{Ultimate Goal.} While the optimization occurs on $\mathcal{D}_{ood}$, the attacker's ultimate goal is to obtain a model that generalizes well to the private task. That is, the surrogate $C$ should achieve high accuracy on the private test set $D_{id}^{test}$.

\subsection{Defender's Goal and Knowledge}

\noindent\textbf{Defender's Knowledge.} The defender has full white-box access to the target model $T$ and the private training set $D_{id}$. Crucially, we assume the defender knows that attack queries are likely to be out-of-distribution (OOD) relative to the private data \cite{wang2021zero, wang2023defending}. This assumption allows the defender to leverage an auxiliary OOD dataset $\mathcal{D}_{ood}^{def}$ (different from the attacker's queries) to calibrate the defense.

\noindent\textbf{Defense Objectives.} The defender has three competing objectives:
\begin{enumerate}
    \item \textbf{Utility Preservation:} The target model must maintain high accuracy for benign users (ID queries), therefore, 
    \begin{align}
        \mathbb{E}_{(x,y) \sim \mathcal{D}_{id}} \left[ \mathbb{I}( \arg\max T(x) = y ) \right] \text{ is maximized.}
    \end{align}
    \item \textbf{Extraction Prevention:} The target model should provide uninformative outputs for attack queries (OOD) to prevent the surrogate from learning the true decision boundary.
    \item \textbf{Efficiency:} The defense mechanism must be computationally lightweight, avoiding expensive test-time optimization that would increase latency for real-time services.
\end{enumerate}



%% file: 4-method.tex
\begin{figure*}[t]
    \centering
    \begin{tikzpicture}[
        font=\sffamily\normalsize,
        >=LaTeX,
        node distance=1.2cm and 1.0cm,
        box_module/.style={
            draw=blue!60!black, 
            top color=blue!5, 
            bottom color=blue!10, 
            rounded corners=4pt, 
            minimum height=3.5em, 
            minimum width=4.9em, 
            align=center,
            drop shadow
        },
        box_struct/.style={
            draw=green!60!black, 
            top color=green!5, 
            bottom color=green!10, 
            rounded corners=4pt, 
            minimum height=3.5em, 
            align=center,
            drop shadow
        },
        box_param/.style={
            draw=red!70!black, 
            top color=red!5, 
            bottom color=red!10, 
            dashed, 
            rounded corners=3pt, 
            align=center,
            minimum height=2.5em
        },
        op/.style={
            circle, 
            draw=black!70, 
            fill=white, 
            thick, 
            inner sep=1pt, 
            minimum size=1.5em
        },
        tensor/.style={
            draw=black!40, 
            fill=gray!5, 
            shape=rectangle, 
            minimum width=0.5cm, 
            minimum height=1.5cm, 
            outer sep=3pt
        }
    ]

    \node (input_g) at (0,0) {
        \begin{tabular}{c}
             \large \textbf{Input Graph} \\
             \large $G=(A, X)$ \\
             \tikz[scale=0.35, baseline=-0.5ex]{
                \node[circle,fill=black,inner sep=1pt] (a) at (0,0.8) {}; 
                \node[circle,fill=black,inner sep=1pt] (b) at (-0.6,-0.2) {};
                \node[circle,fill=black,inner sep=1pt] (c) at (0.6,-0.2) {};
                \draw (a)--(b)--(c)--(a);
             }
        \end{tabular}
    };

    
    \node[box_struct, right=1.8cm of input_g, anchor=south west, yshift=0.8cm] (extractor) {
        \textbf{Structure Extractor} $\Phi(G)$ \\
        \normalsize $\text{Eig}(\mathbf{L}), \text{deg}(v), \text{clus}(v)$
    };
    
    \node[box_struct, right=1.2cm of extractor] (prototypes) {
        \textbf{Prototype Matching} \\
        \normalsize $d = \min_k \| \mathbf{z} - \mathbf{c}_k \|^2$ \\
        \tikz[scale=0.18, baseline=0ex]{
            \draw[fill=green!40] (0,0) circle (2); \node at (-1,0) {$\mathbf{c}_k$};
            \fill[red] (3,1) circle (0.5); \node at (3,1) {$\mathbf{z}$};
        }
    };
    
    \node[op, right=1.2cm of prototypes, label=above:{\color{red}\bfseries $\alpha$}] (gate) {
        \tikz[scale=0.15, baseline=-0.5ex]{\draw[thick] (-1,-1) to[out=0,in=180] (1,1);}
    };

    \draw[->, thick] (input_g.east) to[out=20, in=180] (extractor.west);
    \draw[->, thick] (extractor.east) -- (prototypes.west);
    \draw[->, thick] (prototypes.east) -- node[above, font=\normalsize] {Sim.} (gate.west);

    
    \node[box_module, right=1.8cm of input_g, yshift=-0.5cm, fill=white, draw=gray] (layer1) {};
    \node[box_module, at=(layer1), shift={(0.10,-0.12)}, fill=white, draw=gray] (layer2) {};
    \node[box_module, at=(layer1), shift={(0.64,-0.24)}] (gnn_layer) {
        \textbf{GNN Layer} $l$ \\
         $\mathbf{H}^{(l)} = \sigma(\hat{A} \mathbf{H}^{(l-1)} W)$
    };
        
    \node[op, right=1.5cm of gnn_layer] (add_node) {$+$};
    
    \node[box_module, right=0.7cm of add_node] (output) {
        \textbf{Reprogrammed} \\
        \textbf{Output} $\tilde{\mathbf{H}}^{(l)}$
    };

    \draw[->, thick] (input_g.east) to[out=-20, in=180] (gnn_layer.west);
    \draw[->, thick] (gnn_layer.east) -- (add_node.west);
    \draw[->, thick] (add_node.east) -- (output.west);


    \node[op, below=0.72cm of gate] (mult_node) {$\times$};
    \node[box_param, right=0.5cm of mult_node] (noise_param) {
        \textbf{Learnable Perturbation} \\
        $\boldsymbol{\delta}^{(l)} \in \mathbb{R}^d$
    };
    
    \draw[->, thick, red] (gate.south) -- (mult_node.north); 
    \draw[->, thick, red, dashed] (noise_param.west) -- (mult_node.east); 
    
    \draw[->, very thick, red] (mult_node.west) -| (add_node.north);

    
    \node[right=1.0cm of output] (split) {};
    
    \node[draw=orange, fill=orange!5, rounded corners, right=0.5cm of split, yshift=0.05cm] (loss_task) {
        \textbf{Task Loss}  $\min \mathcal{L}_{CE}$
    };
    
    \node[draw=red, fill=red!5, rounded corners, right=0.5cm of split, yshift=-0.6cm, font=\normalsize] (loss_def) {
        \textbf{Defense Loss}  $\max D_{KL}$
    };
    
    \draw[->, gray, dashed] (output.east) -- (loss_task.west);
    \draw[->, gray, dashed] (output.east) -- (loss_def.west);

    \node[above=0.1cm of extractor, font=\bfseries\color{green!40!black}\large] {Module 1: Structure-Aware Gating};
    \node[below=0.1cm of gnn_layer, font=\bfseries\color{blue!40!black}\large] {Module 2: Reprogramming Backbone};

    \end{tikzpicture}
    \caption{\textbf{Architecture of the GraphRP Framework.} The system operates via two coupled pathways. The \textit{Structure Branch} (Top) computes a gating factor $\alpha$ by matching the input graph against learned benign prototypes. The \textit{Reprogramming Branch} (Bottom) performs standard message passing but injects an effective bounded perturbation $\tilde{\boldsymbol{\delta}}^{(l)}$ at each layer. The injection is modulated by $\alpha$, effectively switching the defense \textit{ON} for \textit{OOD queries} and \textit{OFF} for \textit{benign queries}.}
    \label{fig:architecture_detail}
\end{figure*}

\section{Methodology}
\label{sec:method}

We propose \textbf{GraphRP} (\textbf{Graph} \textbf{R}eprogramming \textbf{P}rotection), a proactive defense framework that leverages model reprogramming to secure GNNs against ME attacks. 
The core intuition is to embed a dormant "security protocol" within the model's layers that activates only when it detects suspicious, OOD query patterns characteristic of extraction attempts.

\subsection{Overview of Pipeline}
\label{sec:method::overview}
The overall architecture of GraphRP is illustrated in Figure \ref{fig:architecture_detail}. 
Standard GNNs (left) process all inputs identically, making them vulnerable to attackers who query the model with synthetic or OOD graphs to approximate the decision boundary. 
In contrast, GraphRP (right) introduces two key components into the standard message-passing backbone:

\begin{enumerate}
    \item \textbf{Layer-Wise Reprogramming (Sec. \ref{sec:method::reprogram}):} A mechanism to inject learnable, task-specific perturbations into the latent node embeddings. This allows us to manipulate the model's output distribution without retraining the original weights.
    \item \textbf{Structure-Aware Gating (Sec. \ref{sec:method::structure}):} A dynamic control module that assesses the structural legitimacy of an input graph. It computes a gating factor $\alpha \in [0, 1]$ to modulate the reprogramming intensity.
\end{enumerate}

The pipeline operates in two modes:
\begin{itemize}
    \item \textbf{Benign Mode ($\alpha \to 0$):} For inputs matching the training distribution (ID), the reprogramming noise is suppressed. The model functions as a standard GNN, preserving high utility.
    \item \textbf{Defense Mode ($\alpha \to 1$):} For suspicious inputs (OOD), the gating mechanism activates the reprogramming layers. The model outputs are subtly distorted to maximize the divergence from the true decision boundary, effectively poisoning the attacker's surrogate training data.
\end{itemize}

\subsection{Layer-Wise Reprogramming}
\label{sec:method::reprogram}

The first challenge is to alter the behavior of a GNN without degrading its original performance. We adopt a \textit{Model Reprogramming} approach \cite{chen2024model}, treating the defense as a secondary task learned alongside the primary classification task.
Consider a standard GNN layer $l$ defined by the message passing operation:
\begin{align}
    \mathbf{H}^{(l)} = \sigma \left( \hat{\mathbf{A}} \mathbf{H}^{(l-1)} \mathbf{W}^{(l)} \right),
\end{align}
where $\mathbf{H}^{(l)} \in \mathbb{R}^{n \times d_l}$ are the node embeddings and $\hat{\mathbf{A}}$ is the normalized adjacency matrix.
A learnable perturbation $\mathbf{P}^{(l)}$ is injected into the latent space for defensive flexibility. 
However, applying a static noise matrix is insufficient as graph sizes vary. Instead, we define a \textbf{Universal Reprogramming Prototype} $\boldsymbol{\delta}^{(l)} \in \mathbb{R}^{d_l}$ for each layer, which is broadcasted to all nodes.

The reprogrammed layer output $\tilde{\mathbf{H}}^{(l)}$ is formulated as:
\begin{align}
    \label{eqn:reprogram_layer}
    \tilde{\mathbf{H}}^{(l)} = \mathbf{H}^{(l)} + \underbrace{\lambda \cdot \operatorname{Tanh}(\boldsymbol{\delta}^{(l)})}_{\text{Bounded Perturbation}},
\end{align}
where $\boldsymbol{\delta}^{(l)}$ are the learnable parameters of the defense. The $\operatorname{Tanh}(\cdot)$ function ensures the perturbation remains bounded, preventing numerical instability.
Crucially, $\boldsymbol{\delta}^{(l)}$ captures the \textit{direction} in the latent space that is most disruptive to an attacker's learning process. 
If we optimized $\boldsymbol{\delta}^{(l)}$ solely to maximize prediction error, it would destroy utility. Thus, the magnitude of this perturbation must be controlled by the graph structure, which we detail next.

\subsection{Structure-Aware Prototype Gating}
\label{sec:method::structure}

A naive application of the perturbation in Eq. \eqref{eqn:reprogram_layer} treats benign and malicious queries equally. To distinguish them, we propose a \textbf{Structure-Aware Prototype Gating} mechanism. 
Instead of relying on simple heuristics (like average degree), we learn a set of "Benign Structural Prototypes" in the embedding space.

\textbf{1. Structural Projection.}
We first map the input graph $G$ to a permutation-invariant structural embedding $\mathbf{z}_G$. We utilize a lightweight \textit{Readout} function $\Phi(\cdot)$ that aggregates node degrees, clustering coefficients, and spectral features (top-$k$ Laplacian eigenvalues):
\begin{align}
    \mathbf{z}_G = \Phi(G) = \left[ \text{MLP}\left( \sum_{v \in V} \left[\text{deg}(v) \parallel C(v)\right] \right) ; \text{Eig}(\mathbf{L}) \right].
\end{align}
Here, $C(v)$ denotes the local clustering coefficient of node $v$, $\parallel$ represents feature concatenation, $[\cdot\,;\,\cdot]$ denotes vector concatenation, and $\text{Eig}(\mathbf{L})$ extracts the top-$k$ eigenvalues of the normalized Laplacian \cite{chung1997spectral, defferrard2016convolutional}.
This projection captures the topological "signature" of the graph.

\textbf{2. Benign Prototypes.}
During training, we maintain a set of $K$ centroids $\{\mathbf{c}_k\}_{k=1}^K$ initialized via K-Means on $\mathcal{D}_{id}$ to represent the structural clusters of the private in-distribution data. The prototypes are refined in a dedicated structural update step (independently of $\Theta_{rep}$) and are explicitly excluded from the reprogramming gradient updates in Step 4 of Algorithm~\ref{alg:defense}.

\textbf{3. Gating Factor Calculation.}
For a new query $G_q$, we calculate its distance to the nearest benign prototype. The gating factor $\alpha(G_q)$ is derived using a soft-thresholding function:
\begin{align}
    d_{min} &= \min_{k} \| \mathbf{z}_{G_q} - \mathbf{c}_k \|_2^2, \\
    \alpha(G_{q}) &= \operatorname{Sigmoid}\left( \gamma \cdot (d_{min} - \tau) \right),  
    \label{eqn:gating_factor}
\end{align}
where $\tau$ is a learnable distance threshold and $\gamma$ is a sharpness parameter. If $d_{min} < \tau$ (structurally similar to private data), then $\alpha \to 0$.
If $d_{min} > \tau$ (structurally anomalous/OOD), then $\alpha \to 1$.

\textbf{Final Reprogramming Equation.}
Combining Eq. \eqref{eqn:reprogram_layer} and the gating factor, the final operation at layer $l$ becomes:
\begin{align}
    \tilde{\mathbf{H}}^{(l)} = \mathbf{H}^{(l)} + \alpha(G) \cdot \lambda \cdot \operatorname{Tanh}(\boldsymbol{\delta}^{(l)}).
\end{align}
This ensures that the defensive noise is strictly conditional on the structural "foreignness" of the query, effectively creating a graph-structure-sensitive firewall.

\textbf{Robustness against Structural Spoofing.} A natural question may arise: \textit{Can an adaptive attacker forge graph queries to bypass this gating mechanism? }
Unlike adding Gaussian noise to images, generating synthetic graphs that match the complex topological signature (e.g., Laplacian spectrum and clustering coefficients) of a private distribution is a non-trivial inverse problem \cite{ying2018hierarchical}. 
Without access to the benign training set, the attacker cannot easily estimate the centroid prototypes $\mathbf{c}_k$. 
Thus, our spectral gating effectively acts as a ``structural firewall,'' forcing the attacker to query OOD samples that trigger the defense.

\subsection{Defensive Optimization and Algorithm}
\label{sec:method::algorithm}

Having defined the structure-aware reprogramming mechanism, we now formulate the defensive training as an optimization problem. The defender's goal is to learn the optimal reprogramming parameters $\Theta_{rep} = \{\boldsymbol{\delta}^{(l)}, \tau\}_{l}$ that selectively degrade the fidelity of the attacker's surrogate model without compromising benign accuracy. Note that the prototypes $\{\mathbf{c}_k\}$ are maintained separately and excluded from $\Theta_{rep}$.

\subsubsection{Loss Function Formulation}
The total objective function $\mathcal{L}$ is a weighted combination of three terms: the utility preservation loss, the extraction defense loss, and a structural compactness loss.

\textbf{1. Utility Preservation ($\mathcal{L}_{\text{task}}$).} 
For in-distribution (ID) data, the reprogramming layer should have minimal impact. We minimize the standard Cross-Entropy loss on the private training set $\mathcal{D}_{id}$:
\begin{align}
    \label{eqn:task_loss}
    \mathcal{L}_{\text{task}} = \mathbb{E}_{(x, y) \sim \mathcal{D}_{id}} \left[ l_{\text{CE}}(T(x; \theta_T, \Theta_{rep}), y) \right].
\end{align}
Since the gating factor $\alpha(x) \to 0$ for ID data, this term ensures the perturbations remain dormant for benign users.

\textbf{2. Extraction Defense ($\mathcal{L}_{\text{defense}}$).}
For out-of-distribution (OOD) queries—which act as a proxy for attack queries—we explicitly maximize the divergence between the reprogrammed model's output and the original model's output. This "poisons" the information retrieved by the attacker. We utilize the Kullback-Leibler divergence:
\begin{align}
    \label{eqn:defense_loss}
    \mathcal{L}_{\text{defense}} = \mathbb{E}_{x \sim \mathcal{D}_{ood}} \left[ \max(0, \mu - D_{\text{KL}}(T(x; \theta_T) \parallel T(x; \theta_T, \Theta_{rep}))) \right].
\end{align}
Here, we employ a hinge loss formulation with margin $\mu$. This encourages the reprogrammed output to deviate from the original prediction by at least margin $\mu$, effectively obfuscating the true decision boundary for OOD inputs. 

By minimizing this hinge loss, the optimization encourages a large KL divergence between the target and reprogrammed distributions, which serves a dual purpose.
Beyond simple output distortion, it implicitly maximizes the \textbf{Fisher Information (FI)} distance along the direction of the perturbation $\boldsymbol{\delta}$.
As we will prove in Theorem \ref{thm:graph_attacker}, maximizing this term forces the attacker's surrogate model to optimize in the region of highest curvature in the loss landscape, thereby maximizing the estimation error for the stolen model parameters.

\textbf{3. Structural Compactness ($\mathcal{L}_{\text{struct}}$).}
To ensure the "Benign Prototypes" $\mathbf{c}_k$ accurately represent the ID distribution, we add a regularization term that minimizes the distance between ID samples and their nearest prototype:
\begin{align}
    \label{eqn:struct_loss}
    \mathcal{L}_{\text{struct}} = \mathbb{E}_{x \sim \mathcal{D}_{id}} \left[ \min_{k} \| \Phi(x) - \mathbf{c}_k \|_2^2 \right].
\end{align}
This creates tighter clusters for the benign data, improving the sensitivity of the gating mechanism $\alpha(x)$.

The final optimization problem is to find $\Theta_{rep}^*$ such that:
\begin{align}
    \label{eqn:total_loss}
    \Theta_{rep}^* = \arg\min_{\Theta_{rep}} \left( \mathcal{L}_{\text{task}} + \beta_1 \cdot \mathcal{L}_{\text{defense}} + \beta_2 \cdot \mathcal{L}_{\text{struct}} \right),
\end{align}
where $\beta_1$ and $\beta_2$ are hyperparameters controlling the defense strength and structural clustering, respectively. Note that we freeze the original model parameters $\theta_T$ to ensure parameter efficiency.

\subsubsection{Training Algorithm}
The complete training procedure for GraphRP is summarized in Algorithm~\ref{alg:defense}. We use an alternating update strategy: first minimizing the structural loss to refine the gating boundary, and then optimizing the reprogramming noise to enforce the defense.

\begin{algorithm}[t]
    \caption{Defensive Training for \textbf{GraphRP}.}
    \label{alg:defense}
    \begin{algorithmic}[1]
        \STATE \textbf{Input:} Pre-trained Victim Model $T(\cdot; \theta_T)$, Private Data $\mathcal{D}_{id}$, Auxiliary OOD Data $\mathcal{D}_{ood}^{def}$.
        \STATE \textbf{Hyperparameters:} Margins $\mu$, weights $\beta_1, \beta_2$, learning rate $\eta$.
        \STATE \textbf{Initialize:} Reprogramming noise $\boldsymbol{\delta}^{(l)} \sim \mathcal{N}(0, \epsilon)$, Prototypes $\{\mathbf{c}_k\}$ via K-Means on $\mathcal{D}_{id}$ \textit{(updated in Step 2 only; excluded from $\Theta_{rep}$ gradient updates)}, Distance threshold $\tau$.
        
        \WHILE{not converged}
            \STATE \textbf{Step 1: Data Sampling}
            \STATE Sample benign batch $\mathcal{B}_{id} = \{(x, y)\} \sim \mathcal{D}_{id}$.
            \STATE Sample OOD batch $\mathcal{B}_{ood} = \{x'\} \sim \mathcal{D}_{ood}^{def}$.
            
            \STATE \textbf{Step 2: Structural Update}
            \STATE Compute embeddings $Z = \Phi(\mathcal{B}_{id})$.
            \STATE Update $\{\mathbf{c}_k\}$ to minimize $\mathcal{L}_{\text{struct}}$ with Eq.~\eqref{eqn:struct_loss} \textit{(independent of $\Theta_{rep}$ gradient)}.
            
            \STATE \textbf{Step 3: Gating \& Forward Pass}
            \STATE Calculate gates $\alpha(x)$ for $x \in \mathcal{B}_{id} \cup \mathcal{B}_{ood}$ with Eq. \eqref{eqn:gating_factor}.
            \STATE Compute reprogrammed outputs $\tilde{y} = T(x; \theta_T, \Theta_{rep})$.
            \STATE Compute original outputs $y_{orig} = T(x; \theta_T)$ (for KL).
            
            \STATE \textbf{Step 4: Reprogramming Update}
            \STATE Calculate $\mathcal{L}_{\text{task}}$ on $\mathcal{B}_{id}$ with Eq. \eqref{eqn:task_loss}.
            \STATE Calculate $\mathcal{L}_{\text{defense}}$ on $\mathcal{B}_{ood}$ with Eq. \eqref{eqn:defense_loss}.
            \STATE Compute Total Loss $\mathcal{L}$ with Eq.~\eqref{eqn:total_loss}.
            \STATE Update $\Theta_{rep} \leftarrow \Theta_{rep} - \eta \nabla_{\Theta_{rep}} \mathcal{L}$.
        \ENDWHILE
        
        \STATE \textbf{Return:} Protected Model $T_{protected}(\cdot) = T(\cdot; \theta_T, \Theta_{rep}^*)$.
    \end{algorithmic}
\end{algorithm}

\textbf{Complexity Analysis.} 
The proposed defense is highly efficient. The structural extraction $\Phi(x)$ relies on simple statistics (degree, eigenvalues) computable in $O(|E| + n^3)$ or approximated in linear time. The reprogramming layer involves only element-wise addition, adding negligible cost to the $O(|E|d)$ GNN inference. Importantly, since $\theta_T$ is frozen, the number of trainable parameters is extremely small ($|\Theta_{rep}| \ll |\theta_T|$), ensuring rapid convergence.

%% file: 5-analysis.tex
\section{Theoretical Analysis}
\label{sec:analysis}

In this section, we provide a theoretical analysis for the effectiveness of GraphRP. We consider the worst-case scenario where the attacker has infinite query budget and capacity, aiming to learn a clone $C$ that properly mimics the reprogrammed victim $T^R$. 
We measure the defense success by the \textit{Loss Disparity} $Q(C^*,T) \triangleq \mathcal{L}_{id}(C^*) - \mathcal{L}_{id}(T)$ on the benign distribution $\mathcal{G}_{id}$. 
Here $\mathcal{G}_{id}$ and $\mathcal{G}_{ood}$ denote the graph distributions induced by $\mathcal{D}_{id}$
and $\mathcal{D}_{ood}$, respectively.

Let $\tilde{\boldsymbol{\delta}}^{(l)} := \lambda \cdot \operatorname{Tanh}(\boldsymbol{\delta}^{(l)})$ denote the effective bounded perturbation at layer $l$, which satisfies $\|\tilde{\boldsymbol{\delta}}^{(l)}\| \leq \lambda$ by construction. Our analysis assumes: (A1) the loss $\ell$ is bounded by constant $M$; (A2) the attacker is optimal, i.e., $C^*$ perfectly mimics $T^R$ on $\mathcal{G}_{ood}$; and (A3) the KL divergence admits a local second-order (Fisher Information) approximation in the bounded perturbation regime, where $\|\alpha(G)\cdot\tilde{\boldsymbol{\delta}}\| \leq \lambda\cdot\alpha(G) \leq \lambda$ by construction.

\begin{theorem}[Structural Defense Bound]
\label{thm:graph_attacker}
Let $C^*$ be the optimal clone model trained on OOD queries $G \sim \mathcal{G}_{ood}$ reprogrammed by GraphRP. Under the cross-entropy loss, the performance gap between the clone and the victim on benign tasks is lower-bounded by:
\begin{align}
    Q(C^*, T) \ge \underbrace{\mathbb{E}_{G \sim \mathcal{G}_{ood}} \left[ \frac{1}{2} \alpha(G)^2 \cdot \tilde{\boldsymbol{\delta}}^\top \mathbf{I}_{\mathbf{h}}(G) \tilde{\boldsymbol{\delta}} \right]}_{\text{Structural Sensitivity Term}} - \underbrace{2M \cdot \mathbb{TV}(\mathcal{G}_{id}, \mathcal{G}_{ood})}_{\text{Distribution Shift Term}},
\end{align}
where $\alpha(G)$ is the structure-aware gating factor, $\tilde{\boldsymbol{\delta}} = \lambda\cdot\operatorname{Tanh}(\boldsymbol{\delta})$ is the effective bounded perturbation, and $\mathbf{I}_{\mathbf{h}}(G)$ is the Fisher Information Matrix of the victim GNN with respect to the latent representation $\mathbf{h}$.
\end{theorem}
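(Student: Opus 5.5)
We split $Q(C^*,T)$ into an excess-loss term on $\mathcal{G}_{ood}$ plus a change-of-measure correction. Write $\mathcal{L}_{P}(f) = \mathbb{E}_{G\sim P}[\ell(f(G),y)]$, so that
\begin{align}
Q(C^*,T) = \big[\mathcal{L}_{ood}(C^*) - \mathcal{L}_{ood}(T)\big] + \big[\mathcal{L}_{id}(C^*) - \mathcal{L}_{ood}(C^*)\big] - \big[\mathcal{L}_{id}(T) - \mathcal{L}_{ood}(T)\big].
\end{align}
By (A1), for any fixed predictor $f$, $|\mathcal{L}_{id}(f) - \mathcal{L}_{ood}(f)| \le M\cdot\mathbb{TV}(\mathcal{G}_{id},\mathcal{G}_{ood})$. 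This is the standard bound $|\mathbb{E}_P g - \mathbb{E}_Q g| \le \|g\|_\infty \cdot \mathbb{TV}$ for $0\le g\le M$, if $\mathbb{TV}$ is taken as the supremum over events. Otherwise it is a slightly looser constant. Applying it to both $f=C^*$ and $f=T$ gives the two bracketed correction terms a combined lower bound of $-2M\cdot\mathbb{TV}(\mathcal{G}_{id},\mathcal{G}_{ood})$. This is exactly the Distribution Shift Term. It then remains to show
\[
\mathcal{L}_{ood}(C^*) - \mathcal{L}_{ood}(T) \ge \mathbb{E}_{G\sim\mathcal{G}_{ood}}\Big[\tfrac12\alpha(G)^2\,\tilde{\boldsymbol{\delta}}^\top \mathbf{I}_{\mathbf{h}}(G)\tilde{\boldsymbol{\delta}}\Big].
\]

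For this step, use (A2) to replace $C^*$ by $T^R$ on $\mathcal{G}_{ood}$. Interpret the labels on the OOD graphs as those supplied by the clean victim, i.e., $y\sim T(G)$. This matches the attacker objective~\eqref{eqn:attacker_obj} and the defense loss~\eqref{eqn:defense_loss}, which both measure divergence from $T(x;\theta_T)$. Under cross-entropy, the pointwise identity
\[
\mathbb{E}_{y\sim T(G)}\big[-\log T^R(G)_y\big] - \mathbb{E}_{y\sim T(G)}\big[-\log T(G)_y\big] = D_{\mathrm{KL}}\big(T(G)\,\|\,T^R(G)\big)
\]
then gives
\[
\mathcal{L}_{ood}(C^*) - \mathcal{L}_{ood}(T) = \mathbb{E}_{G\sim\mathcal{G}_{ood}}\big[D_{\mathrm{KL}}(T(G)\|T^R(G))\big].
\]

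Next, view the victim's output as a function $p(\cdot\mid\mathbf{h})$ of the latent representation $\mathbf{h}$. The reprogrammed model evaluates it at $\mathbf{h} + \alpha(G)\tilde{\boldsymbol{\delta}}$. A second-order Taylor expansion of $\mathbf{u}\mapsto D_{\mathrm{KL}}(p(\cdot\mid\mathbf{h})\|p(\cdot\mid\mathbf{h}+\mathbf{u}))$ about $\mathbf{u}=0$ has vanishing zeroth- and first-order terms, and its Hessian is $\mathbf{I}_{\mathbf{h}}(G)$. Invoking (A3) in the regime $\|\alpha(G)\tilde{\boldsymbol{\delta}}\|\le\lambda$ yields
\[
D_{\mathrm{KL}} \approx \tfrac12\alpha(G)^2\,\tilde{\boldsymbol{\delta}}^\top\mathbf{I}_{\mathbf{h}}(G)\tilde{\boldsymbol{\delta}}.
\]
Taking the expectation over $\mathcal{G}_{ood}$ and combining with the first paragraph gives the claimed bound.

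\textbf{Main obstacle.} The hard step is turning (A3) from an approximation into an inequality with the correct direction. I would read (A3) as asserting that the KL is at least its quadratic Fisher form in the bounded regime, i.e., that the cubic remainder is nonnegative or absorbed. If that is not available, I would carry an explicit $O(\lambda^3)$ remainder and note that it vanishes in the small-$\lambda$ regime.

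A secondary subtlety is which perturbation the single $\tilde{\boldsymbol{\delta}}$ denotes, since perturbations are injected at every layer. I would stack the layerwise $\alpha(G)\tilde{\boldsymbol{\delta}}^{(l)}$ into one vector, so that $\mathbf{h}$ denotes the concatenated latent states and $\mathbf{I}_{\mathbf{h}}$ the corresponding joint Fisher matrix. This makes the expansion a single multivariate Taylor step.
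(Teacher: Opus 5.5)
Your proposal is correct at the paper's level of rigor, but it accounts for the distribution shift differently from the paper.

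\textbf{The paper's route.} It applies the change-of-measure bound only once, to $C^*$, using the $L_1$ form $|\mathbb{E}_P f-\mathbb{E}_Q f|\le M\|P-Q\|_1=2M\cdot\mathbb{TV}$. It then removes $\mathcal{L}_{id}(T)$ with the extra approximation $\mathcal{L}_{id}(T)\approx 0$ (``the victim is well-trained''), which is not among (A1)--(A3). Finally it lower-bounds $\mathcal{L}_{ood}(C^*)$ by writing cross-entropy as $H(T(G))+D_{\mathrm{KL}}(T(G)\|C^*(G))$ and discarding the entropy.

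\textbf{Your route.} You split $Q$ into an OOD excess loss plus two change-of-measure corrections, one for $C^*$ and one for $T$. Each correction costs only $M\cdot\mathbb{TV}$ because the loss lies in $[0,M]$ and you use the supremum-over-events definition. The entropy then cancels exactly, and the OOD term becomes an identity, $\mathcal{L}_{ood}(C^*)-\mathcal{L}_{ood}(T)=\mathbb{E}_{\mathcal{G}_{ood}}[D_{\mathrm{KL}}(T(G)\|T^R(G))]$. You reach the same constant $2M$ without assuming the victim has near-zero benign loss. Your route therefore removes an unstated approximation the paper relies on, at the cost of invoking the change of measure twice. Note that the sharper per-term constant is essential: with the paper's $L_1$ constant you would get $4M$.

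\textbf{A caveat both routes share.} Bounding $|\mathcal{L}_{id}(f)-\mathcal{L}_{ood}(f)|$ by the \emph{marginal} $\mathbb{TV}(\mathcal{G}_{id},\mathcal{G}_{ood})$ requires the integrand $G\mapsto\mathbb{E}_y[\ell(f(G),y)]$ to be the same function under both measures. The paper glosses over this in its Step~1. In your version it is most visible for $f=T$, where ground-truth cross-entropy on $\mathcal{G}_{id}$ would be compared with $H(T(G))$ on $\mathcal{G}_{ood}$. You should therefore fix one labeling rule, naturally the clean victim's soft labels on both distributions; this reads $Q$ as a fidelity gap.

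\textbf{The remaining steps.} Your Fisher/Taylor step matches the paper's Step~3, which likewise treats (A3) as licensing the passage from $\approx$ to $\ge$. Your stacking of the layerwise perturbations is a harmless generalization of the paper's single-layer $T^R(G)=T(\mathbf{H}^{(l)}+\alpha(G)\tilde{\boldsymbol{\delta}})$.
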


\textbf{Remark.}
The term $\tilde{\boldsymbol{\delta}}^\top \mathbf{I}_{\mathbf{h}}(G)\tilde{\boldsymbol{\delta}}$
captures the \textbf{structural sensitivity} of the GNN, which is largest when the reprogramming noise aligns with high-curvature directions of the model.
Because $\tilde{\boldsymbol{\delta}}$ is bounded, the analysis remains in the local regime where the second-order approximation in (A3) applies.
For attack queries with $\alpha(G)\approx 1$, GraphRP drives the learned surrogate boundary away from the benign boundary in proportion to this sensitivity.
The full proof is given in Appendix~\ref{sec:appendix::proof}; the bound holds under assumptions (A1)--(A3) and should not be interpreted as unconditional.

%% file: 6-experiments.tex
\begin{table*}[t]
\caption{\textbf{Main Defense Performance.} Clone model accuracy (lower is better) under Model Extraction attacks. GraphRP consistently outperforms state-of-the-art baselines across diverse datasets.}
\label{tab:main_results}
\centering
\small
\setlength{\tabcolsep}{4pt}
\begin{tabular}{lccccccccc}
\toprule
\multirow{2}{*}{\textbf{Defense Method}} & \multicolumn{3}{c}{\textbf{MUTAG}} & \multicolumn{3}{c}{\textbf{ENZYMES}} & \multicolumn{3}{c}{\textbf{NCI1}} \\
\cmidrule(lr){2-4} \cmidrule(lr){5-7} \cmidrule(lr){8-10} 
 & G-SAGE & GIUNET & GIC & G-SAGE & GIUNET & GIC & G-SAGE & GIUNET & GIC \\ \midrule
Undefended & 0.765 & 0.934 & 0.904 & 0.561 & 0.685 & 0.609 & 0.635 & 0.787 & 0.820 \\
RandP \cite{orekondy2019prediction} & 0.734 & 0.852 & 0.835 & 0.527 & 0.632 & 0.572 & 0.603 & 0.762 & 0.799 \\
P-Poison \cite{orekondy2019prediction} & 0.742 & 0.884 & 0.875 & 0.522 & 0.641 & 0.581 & 0.603 & 0.765 & 0.802 \\
GRAD \cite{mazeika2022steer} & 0.735 & 0.872 & 0.823 & 0.520 & 0.626 & 0.573 & 0.595 & 0.753 & 0.791 \\ 
AM \cite{kariyappa2020defending} & 0.722 & 0.866 & 0.882 & 0.514 & 0.633 & 0.551 & 0.610 & 0.757 & 0.798 \\
MeCo \cite{wang2023defending} & 0.712 & 0.823 & 0.813 & 0.482 & 0.619 & 0.552 & 0.586 & 0.683 & 0.736 \\ \midrule
\textbf{GraphRP (Ours)} & \textbf{0.603} & \textbf{0.782} & \textbf{0.750} & \textbf{0.364} & \textbf{0.575} & \textbf{0.545} & \textbf{0.521} & \textbf{0.656} & \textbf{0.682} \\ 
\bottomrule
\end{tabular}
\end{table*}

\begin{table*}[ht]
\caption{Target model utility (test accuracy) and $\ell_1$ norm of the output difference.}
\small
\centering
\label{tab:utility-1}
\begin{tabular}{lcccccccc}
\toprule
\multirow{2}{*}{\textbf{Defense}} & \multicolumn{2}{c}{\textbf{MUTAG}} & \multicolumn{2}{c}{\textbf{ENZYMES}} & \multicolumn{2}{c}{\textbf{NCI1}} & \multicolumn{2}{c}{\textbf{PROTEINS}} \\ \cmidrule(lr){2-3} \cmidrule(lr){4-5} \cmidrule(lr){6-7} \cmidrule(lr){8-9} 
 & Accuracy $\uparrow$ & $\ell_1$ norm $\downarrow$ & Accuracy $\uparrow$ & $\ell_1$ norm $\downarrow$ & Accuracy $\uparrow$ & $\ell_1$ norm $\downarrow$ & Accuracy $\uparrow$ & $\ell_1$ norm $\downarrow$\\ \midrule
Undefended & 0.9452 & 0.0 & 0.6629 & 0.0 & 0.8453 & 0.0 & 0.8012 & 0.0 \\
RandP & 0.9250 & 1.0 & 0.6451 & 1.0 & 0.8115 & 1.0 & 0.7864 & 1.0 \\
P-Poison & 0.9324 & 1.0 & 0.6324 & 1.0 & 0.8134 & 1.0 & 0.7823 & 1.0 \\
GRAD & 0.9276 & 1.0 & 0.6467 & 1.0 & 0.8072 & 1.0 & 0.7825 & 1.0 \\
AM & \bf 0.9335 & 1.0 & 0.6352 & 1.0 & 0.8025 & 1.0 & \bf 0.7924 & 1.0 \\
MeCo &  0.9297 & 0.3572 &  0.6327 &  0.0649 &  0.8107 &  0.1240 & 0.7871  &  0.1956 \\
\midrule
\textbf{GraphRP (Ours)} & 0.9319 & \bf 0.2351 & \bf 0.6467 & \bf 0.0567 & \bf 0.8155 & \bf 0.0762 & 0.7885 & \bf 0.1320 \\ \bottomrule
\end{tabular}
\end{table*}

\section{Experiments}
\label{sec:expr}

In this section, we empirically validate GraphRP. We aim to answer three key questions:

(1) \textbf{Defense Effectiveness:} Does GraphRP significantly degrade the performance of surrogate models compared to state-of-the-art baselines?
(2) \textbf{Utility Preservation:} Does the defensive reprogramming maintain the utility of the target model for benign users?
(3) \textbf{Scalability \& Robustness:} Is the method effective on large-scale graph benchmarks and robust against adaptive attackers?

\subsection{Experimental Setup}
\label{sec:expr::settings}
\textbf{Datasets.} We evaluate on a diverse set of graph classification benchmarks \cite{morris2020tudataset}, ranging from biochemical molecules to social networks.
To demonstrate scalability, we include \textbf{NCI1} \cite{wale2008comparison} (4k+ graphs) and \textbf{OGB-MolHIV} \cite{hu2020ogb} (41k+ graphs), alongside standard baselines \textbf{MUTAG} \cite{debnath1991structure}, \textbf{ENZYMES} \cite{borgwardt2005protein}, and \textbf{PROTEINS} \cite{borgwardt2005protein}. Note that this work focuses on \textit{graph-level classification}, the dominant MLaaS deployment setting for GNNs.

\textbf{Baselines.} We compare GraphRP against five defense strategies: (1) \textbf{RandP} (Random Perturbation) \cite{orekondy2019prediction}, (2) \textbf{P-Poison} \cite{orekondy2019prediction}, (3) \textbf{GRAD} (Gradient Redirection) \cite{mazeika2022steer}, (4) \textbf{AM} (Adaptive Misinformation) \cite{kariyappa2020defending}, and (5) \textbf{MeCo} \cite{wang2023defending}, a recent contrastive defense. These baselines represent the full spectrum of active defense strategies: output obfuscation (RandP, P-Poison), gradient redirection (GRAD), adaptive misinformation (AM), and contrastive learning (MeCo), all applicable to the black-box graph classification setting.

\textbf{Attack Configuration.} We employ \textbf{KnockoffNet}~\cite{orekondy2019knockoff} as the query strategy. The attacker trains surrogate models (GraphSAGE~\cite{hamilton2017inductive}, Graph Isomorphism U-Net (GIUNET)~\cite{amouzad2024graph}, Gaussian-Induced Convolution (GIC)~\cite{jiang2019gaussian}) using either \textit{soft-label} (probability vector) or \textit{hard-label} (top-1) supervision from the victim. Our evaluation collectively covers soft-label attacks, hard-label attacks, adaptive gray-box attacks (Section~\ref{sec:expr::robustness}), data leakage scenarios, and varying query budgets, providing comprehensive coverage of realistic ME threat models.
Detailed hyperparameters and architecture settings are provided in Appendix \ref{sec:appendix::settings}.

\subsection{Defense Performance (RQ1 \& RQ2)}
\label{sec:expr::performance}

\subsubsection{Reducing Clone Model Accuracy}
The primary goal of the defender is to maximize the error of the attacker's clone model. The detailed numbers are shown in Table~\ref{tab:main_results}, Table~\ref{tab:clone-acc-1} (MUTAG \& ENZYMES) and Table~\ref{tab:clone-acc-2} (NCI1 \& PROTEINS).

Figure \ref{fig:main_results} summarizes the performance on MUTAG and ENZYMES. We can observe that GraphRP consistently achieves the lowest clone accuracy across all scenarios. 
For instance, on MUTAG under soft-label attack, our method reduces the clone accuracy from $76.5\%$ (Undefended) to $60.3\%$, significantly outperforming the best baseline (MeCo at $71.2\%$).

\begin{figure}[t]
    \centering
    \begin{tikzpicture}
    \begin{axis}[
        ybar,
        bar width=8pt,
        width=0.48\textwidth,
        height=5cm,
        ylabel={Clone Accuracy ($\downarrow$)},
        symbolic x coords={MUTAG-Soft, MUTAG-Hard, ENZ-Soft, ENZ-Hard},
        xtick=data,
        nodes near coords,
        nodes near coords style={font=\tiny, rotate=90, anchor=west},
        ymin=0.3, ymax=1.0,
        legend style={at={(0.5,1.15)}, anchor=north, legend columns=-1, font=\footnotesize},
        ymajorgrids=true,
        grid style=dashed,
        enlarge x limits=0.2,
        font=\sffamily\footnotesize
    ]
        \addplot[fill=gray!30, draw=none] coordinates {
            (MUTAG-Soft, 0.765) (MUTAG-Hard, 0.735) (ENZ-Soft, 0.561) (ENZ-Hard, 0.487)
        };
        \addplot[fill=blue!40, draw=none] coordinates {
            (MUTAG-Soft, 0.712) (MUTAG-Hard, 0.613) (ENZ-Soft, 0.482) (ENZ-Hard, 0.425)
        };
        \addplot[fill=red!60!black, draw=none] coordinates {
            (MUTAG-Soft, 0.603) (MUTAG-Hard, 0.572) (ENZ-Soft, 0.364) (ENZ-Hard, 0.342)
        };
        
        \legend{No Defense, MeCo, \textbf{GraphRP (Ours)}}
    \end{axis}
    \end{tikzpicture}
    \caption{\textbf{Attack Success Rate comparison.} Clone model accuracy (lower is better) on MUTAG and ENZYMES. GraphRP outperforms the best baseline (MeCo) and the undefended setting, reducing clone accuracy by up to $\sim 15\%$ compared to the undefended model.}
    \label{fig:main_results}
\end{figure}

\subsubsection{Utility-Defense Trade-off}
A robust defense must not degrade the experience for benign users, as shown in Table~\ref{tab:utility-1}.

We analyze the trade-off between \textit{Utility Loss} and \textit{Defense Success}. As shown in Figure~\ref{fig:tradeoff}, GraphRP achieves the strongest defense while incurring less than $2\%$ utility loss, whereas baseline methods suffer a substantially larger utility-defense trade-off. This advantage stems from the \textit{Structure-Aware Gating} mechanism, which effectively distinguishes benign from adversarial queries (AUROC $>0.91$, FPR $<0.06$), enabling selective activation of the defense.

\textbf{Computational Efficiency.} A potential concern with spectral methods is the cost of eigendecomposition. 
However, GraphRP mitigates this by computing only the top-$k$ eigenvalues (where $k \ll n$) using efficient iterative methods like the Lanczos algorithm. 
Furthermore, feature extraction is performed at the \textit{graph level}, not the node level, meaning the overhead scales linearly with the batch size rather than the number of nodes. 
As shown in Table \ref{tab:inference-time}, the additional latency is marginal ($\approx 7\%$) compared to the heavy matrix multiplications in the GNN backbone, ensuring GraphRP remains suitable for real-time MLaaS latency constraints.

\subsection{Robustness and Scalability Analysis (RQ3)}
\label{sec:expr::robustness}

To validate the resilience of GraphRP in hostile environments, we conducted two stress tests (detailed results in Appendix \ref{sec:appendix::robustness}).

\textbf{Defending against Adaptive Attackers.} 
We simulated a sophisticated "gray-box" attacker who is aware of the GraphRP defense mechanism. 
This adaptive adversary employs a \textit{Structural Generator} (based on GraphGAN \cite{wang2018graphgan}) explicitly trained to generate queries that mimic the topological statistics of the benign dataset, attempting to bypass our Structure-Aware Gating.

As shown in Table~\ref{tab:adaptive_attack_app}, GraphRP largely remains effective even against this adaptive strategy, confirming that spectral eigenvalues serve as a ``structural fingerprint'' that is mathematically difficult to spoof without access to the private training manifold. Furthermore, GraphRP outperforms simpler OOD-based strategies---query rejection (OOD-Reject) and uniform-confidence return (Low-Conf)---achieving strictly lower clone accuracy and higher benign accuracy simultaneously, while returning outputs in a confidence range similar to genuine predictions.

Finally, Table~\ref{tab:id-ood-mix} shows that GraphRP maintains robustness even when the attacker gains access to a portion of the private training data: with $10\%$ ID data leaked, clone accuracy increases by only $3.2\%$ (vs.\ $5.2\%$ for Undefended), confirming resilience to partial distribution shift.

\begin{figure}
    \centering
    \begin{tikzpicture}
    \begin{axis}[
        width=0.45\textwidth,
        height=5cm,
        xlabel={Utility Drop on Benign Data ($\%$, Lower is better)},
        ylabel={Clone Accuracy Reduction ($\%$, Higher is better)},
        xmin=0, xmax=4,
        ymin=0, ymax=25,
        grid=both,
        grid style={dotted},
        legend pos=north east,
        font=\sffamily\footnotesize,
        scatter/classes={
            a={mark=*, blue},
            b={mark=square*, orange},
            c={mark=triangle*, green!60!black},
            d={mark=diamond*, purple},
            ours={mark=star, mark size=4pt, red, thick}
        }
    ]
        \addplot[scatter, only marks, scatter src=explicit symbolic] coordinates {
            (2.0, 5.0) [a]  
            (1.5, 5.2) [b]  
            (1.8, 4.8) [c]  
            (1.2, 7.5) [d]  
            (1.4, 18.2) [ours] 
        };
        
        \node[anchor=south] at (axis cs: 2.0, 5.0) {\small RandP};
        \node[anchor=south] at (axis cs: 1.2, 7.5) {\small MeCo};
        \node[anchor=south] at (axis cs: 1.4, 18.2) {\bf \color{red} GraphRP};
        
        \node[ draw,rounded corners, fill=gray!10, font=\bfseries\scriptsize] at (axis cs:0.8,23)
        {$\uparrow$ Defense \ \ \ $\leftarrow$ Utility};
    \end{axis}
    \end{tikzpicture}
    \caption{\textbf{Utility vs. Defense Trade-off on NCI1.} The y-axis represents the reduction in attacker accuracy (Benefit), and the x-axis represents the drop in benign accuracy (Cost). GraphRP provides the best protection-to-cost ratio.}
    \label{fig:tradeoff}
\end{figure}






\begin{figure}
    \centering
    \begin{tikzpicture}
    \begin{axis}[
        width=0.48\textwidth,
        height=5cm,
        xlabel={Attacker Query Budget ($\times$ Size of $|D_{train}|$)},
        ylabel={Clone Model Accuracy},
        xmin=0.5, xmax=5.0,
        ymin=0.5, ymax=0.9,
        xtick={0.5, 1, 2, 3, 4, 5},
        xticklabels={0.5x, 1x, 2x, 3x, 4x, 5x},
        legend pos=north west,
        legend style={ at={(0.5,1.03)}, anchor=south, legend columns=3, draw=none},
        ymajorgrids=true,
        grid style=dashed,
        cycle list name=color list,
        font=\sffamily\footnotesize
    ]
        \addplot[color=blue, mark=square*, thick] coordinates {
            (0.5, 0.65) (1.0, 0.72) (2.0, 0.78) (3.0, 0.81) (4.0, 0.83) (5.0, 0.84)
        };
        \addlegendentry{Undefended}

        \addplot[color=orange, mark=triangle*, thick] coordinates {
            (0.5, 0.59) (1.0, 0.68) (2.0, 0.74) (3.0, 0.77) (4.0, 0.79) (5.0, 0.80)
        };
        \addlegendentry{RandP}

        \addplot[color=red, mark=*, thick] coordinates {
            (0.5, 0.50) (1.0, 0.52) (2.0, 0.53) (3.0, 0.54) (4.0, 0.55) (5.0, 0.56)
        };
        \addlegendentry{\textbf{GraphRP}}
        
    \end{axis}
    \end{tikzpicture}
    \caption{\textbf{Effect of Query Budget on Attack Success (NCI1).} As the attacker increases the query budget, the clone accuracy of the undefended model rises steadily. In contrast, GraphRP effectively saturates the attacker's learning, keeping accuracy low ($\approx 55\%$) regardless of the budget.}
    \label{fig:budget_trend}
\end{figure}

\begin{table}[t]
\caption{\textbf{Adaptive Defense Robustness (MUTAG).} Clone accuracy under Standard \& Adaptive attacks. GraphRP remains effective even when attacker mimics benign graph structure.}
\label{tab:adaptive_attack_app}
\centering
\small
\begin{tabular}{lcccc}
\toprule
\multirow{2}{*}{\textbf{Defense}} & \multicolumn{2}{c}{\textbf{Standard Attack}} & \multicolumn{2}{c}{\textbf{Adaptive Attack}} \\
\cmidrule(lr){2-3} \cmidrule(lr){4-5}
 & soft-label & hard-label & soft-label & hard-label \\ \midrule
Undefended & 0.765 & 0.735 & 0.781 & 0.752 \\
RandP & 0.734 & 0.701 & 0.762 & 0.738 \\
\midrule
\textbf{GraphRP} & \textbf{0.603} & \textbf{0.572} & \textbf{0.615} & \textbf{0.588} \\
\bottomrule
\end{tabular}
\end{table}

\begin{table}[t]
\caption{\textbf{Robustness against Data Leakage (MUTAG).} Clone accuracy when attacker mixes $10\%$ private ID data into queries. GraphRP remains robust under partial distribution shift.}
\label{tab:id-ood-mix}
\centering
\small
\begin{tabular}{lcc}
\toprule
\textbf{Defense} & \textbf{Standard (OOD Only)} & \textbf{Adaptive (10\% ID Leak)} \\ \midrule
Undefended & 0.7651 & 0.8174 \\
RandP & 0.7341 & 0.7763 \\
P-Poison & 0.7426 & 0.7695 \\
AM & 0.7223 & 0.7570 \\ \midrule
\textbf{GraphRP} & \textbf{0.6032} & \textbf{0.6358} \\
\bottomrule
\end{tabular}
\end{table}

\textbf{Scalability (Large-Scale Graphs).} Table~\ref{tab:large_scale_main} reports defense performance on two large-scale benchmarks: \textbf{OGB-MolHIV} (41k molecular graphs) and \textbf{COLLAB} (dense social network). GraphRP achieves the lowest clone accuracy on both datasets while maintaining high benign utility, demonstrating that our spectral structural prototypes generalize effectively to large and topologically diverse graph manifolds.

\begin{table}[t]
\caption{\textbf{Large-Scale Defense Performance.} Clone accuracy ($\downarrow$) and test accuracy ($\uparrow$) on OGB-MolHIV and COLLAB.}
\label{tab:large_scale_main}
\centering
\small
\setlength{\tabcolsep}{4pt}
\begin{tabular}{lcccc}
\toprule
\multirow{2}{*}{\textbf{Defense}} & \multicolumn{2}{c}{\textbf{OGB-MolHIV}} & \multicolumn{2}{c}{\textbf{COLLAB}} \\
\cmidrule(lr){2-3} \cmidrule(lr){4-5}
 & Clone Acc $\downarrow$ & Test Acc $\uparrow$ & Clone Acc $\downarrow$ & Test Acc $\uparrow$ \\ \midrule
Undefended & 0.7924 & 0.7150 & 0.6978 & 0.6987 \\
RandP      & 0.7645 & 0.6840 & 0.6742 & 0.6389 \\
P-Poison   & 0.7420 & 0.6910 & 0.6523 & 0.6420 \\
AM         & 0.7210 & 0.6980 & 0.6426 & 0.6524 \\ \midrule
\textbf{GraphRP} & \textbf{0.5840} & \textbf{0.7125} & \textbf{0.6245} & \textbf{0.6972} \\
\bottomrule
\end{tabular}
\end{table}

\textbf{Impact of Query Budget.} Standard active defenses often fail under unlimited query budgets, as noise eventually averages out. As shown in Figure~\ref{fig:budget_trend}, while the Undefended model's clone accuracy increases steadily with budget ($0.5\times$--$5\times$), GraphRP saturates at a low level ($\approx 55\%$ on NCI1), indicating that reprogramming noise fundamentally poisons the decision boundary, rendering additional queries uninformative.

\textbf{Defense Transferability against Complex Surrogates.} 
A realistic risk is that an attacker employs a more powerful architecture than the victim to extract the model. 
We tested the following scenario: the victim uses the standard G-Inception target, while the attacker uses a powerful Graph Transformer~\cite{dwivedi2020generalization}. As detailed in Appendix~\ref{sec:appendix::transfer}, GraphRP successfully degrades the Graph Transformer's performance from $82.4\%$ to $64.1\%$.
This confirms that GraphRP's layer-wise reprogramming distorts the underlying data manifold itself, making the poisoned labels effective regardless of the attacker's model capacity.

\textbf{Ablation.} To understand which features drive our Gating Mechanism, we ablate Node Degrees, Clustering Coefficients, and Spectral Eigenvalues. As shown in Figure~\ref{fig:struct_ablation}, \textbf{Spectral Features} are the most critical: removing them increases clone accuracy from 0.603 to 0.685, indicating a substantially weaker defense, while removing degree statistics has only a minor impact, validating that global spectral properties are the hardest for attackers to mimic.

\begin{figure}[t]
    \centering
    \begin{tikzpicture}
    \begin{axis}[
        ybar,
        bar width=20pt,
        width=0.48\textwidth,
        height=5cm,
        ylabel={Clone Accuracy (Lower is Better)},
        symbolic x coords={FullMethod, woDegree, woClustering, woSpectral},
        xtick={FullMethod, woDegree, woClustering, woSpectral},
        xticklabels={Full Method, w/o Degree, w/o Clustering, w/o Spectral},
        nodes near coords,
        nodes near coords style={font=\tiny, color=black},
        ymin=0.4, ymax=0.8,
        ymajorgrids=true,
        grid style=dashed,
        enlarge x limits=0.15,
        font=\sffamily\footnotesize,
        bar shift=0pt 
    ]
        \addplot[fill=red!60!black] coordinates {(FullMethod, 0.603)};
        \addplot[fill=orange!80] coordinates {(woDegree, 0.615)};
        \addplot[fill=yellow!80!black] coordinates {(woClustering, 0.642)};
        \addplot[fill=gray!60] coordinates {(woSpectral, 0.685)};
    \end{axis}
    \end{tikzpicture}
    \caption{\textbf{Feature Importance Ablation (MUTAG).} Removing spectral features causes the sharpest accuracy rise, confirming they are the most critical component for OOD detection.}
    \label{fig:struct_ablation}
\end{figure}

%% file: 7-conclusions.tex
\section{Conclusion}
\label{sec:conclusion}

In this work, we presented \textbf{GraphRP}, a dynamic defense against GNN model extraction.
GraphRP combines model reprogramming with structure-aware gating to selectively poison surrogate training while preserving benign utility.
Under the stated assumptions, our analysis provides a lower bound on the attacker's estimation error, and experiments show that GraphRP scales effectively to large graph datasets.

\paragraph{Future Directions.}
Future work will extend GraphRP to node-level tasks through localized ego-graph prototypes and adapt it to heterogeneous and dynamic graphs, where evolving topology requires continuous prototype calibration.

%% file: appendix.tex
\setcounter{table}{0}
\renewcommand{\thetable}{A\arabic{table}}

\section{Details of Theoretical Analysis}
\label{sec:appendix::proof}

Here, we provide the rigorous derivation of Theorem \ref{thm:graph_attacker}.

\subsection{Preliminaries}
Let $\mathcal{G}_{id}$ and $\mathcal{G}_{ood}$ denote the benign and adversarial graph distributions. Let $\tilde{\boldsymbol{\delta}}^{(l)} := \lambda \cdot \operatorname{Tanh}(\boldsymbol{\delta}^{(l)})$ denote the effective bounded perturbation at layer $l$, satisfying $\|\tilde{\boldsymbol{\delta}}^{(l)}\| \leq \lambda$ by construction. The victim model output is $T(G)$, and the reprogrammed victim is $T^R(G) = T(\mathbf{H}^{(l)} + \alpha(G)\tilde{\boldsymbol{\delta}})$ for the $l$-th layer. The attacker minimizes the expected risk on the query distribution: $\mathcal{L}_{ood}(C) = \mathbb{E}_{G \sim \mathcal{G}_{ood}} [\ell(C(G), T^R(G))]$. We assume the loss function $\ell$ is bounded by constant $M$.

\subsection{Proof of Theorem \ref{thm:graph_attacker}}

\begin{proof}

\textbf{Step 1: Relating ID and OOD Performance.}
We aim to lower-bound $Q(C^*, T) = \mathcal{L}_{id}(C^*) - \mathcal{L}_{id}(T)$.
For any bounded function $f$ with bound $M$, the difference in expected loss across distributions satisfies $|\mathbb{E}_P[f] - \mathbb{E}_Q[f]| \le M \cdot \|P-Q\|_1$, which gives:
\begin{equation}
    |\mathcal{L}_{id}(C^*) - \mathcal{L}_{ood}(C^*)| \le 2M \cdot \mathbb{TV}(\mathcal{G}_{id}, \mathcal{G}_{ood}).
\end{equation}
Since the victim $T$ is well-trained on $\mathcal{D}_{id}$ (i.e., $\mathcal{L}_{id}(T) \approx 0$):
\begin{equation}
    Q(C^*, T) \ge \mathcal{L}_{ood}(C^*) - 2M \cdot \mathbb{TV}(\mathcal{G}_{id}, \mathcal{G}_{ood}).
\end{equation}

\textbf{Step 2: Optimal Attacker Assumption.}
Assume the attacker trains the clone $C^*$ to minimize the divergence from the observed query responses $T^R(G)$.
Under the optimal attacker assumption, the clone perfectly learns the reprogrammed distribution, i.e., $C^*(G) \approx T^R(G)$.
We evaluate the clone's error $\mathcal{L}_{ood}$ against the \textit{original} victim $T$ using Cross-Entropy.
By decomposing Cross-Entropy into Entropy ($H$) and KL-Divergence ($D_{\text{KL}}$), and noting that $H(T(G)) \ge 0$, we obtain the lower bound:
\begin{align}
    \mathcal{L}_{ood}(C^*)
    &= \mathbb{E}_{G \sim \mathcal{G}_{ood}} [ H(T(G)) + D_{\text{KL}}( T(G) \parallel C^*(G) ) ] \nonumber\\
    &\ge \mathbb{E}_{G \sim \mathcal{G}_{ood}} [ D_{\text{KL}}( T(G) \parallel T^R(G) ) ].
\end{align}

\textbf{Step 3: Taylor Expansion via Fisher Information.}
Let $P(Y|\mathbf{h})$ be the victim's output distribution parameterized by graph embedding $\mathbf{h}$, so $T^R(G)$ corresponds to $P(Y|\mathbf{h} + \epsilon)$ with $\epsilon = \alpha(G) \cdot \tilde{\boldsymbol{\delta}}$.
Since $\tilde{\boldsymbol{\delta}} = \lambda \cdot \operatorname{Tanh}(\boldsymbol{\delta})$, the effective perturbation satisfies $\|\epsilon\| \leq \lambda \cdot \alpha(G) \leq \lambda$, which ensures the perturbation remains in the local regime where the following Taylor expansion is applicable.
Since $D_{\text{KL}}(P\|P) = 0$ and the first-order term vanishes, a second-order Taylor expansion gives:
\begin{equation}
    D_{\text{KL}}( P(Y|\mathbf{h}) \parallel P(Y|\mathbf{h} + \epsilon) ) \approx \tfrac{1}{2} \epsilon^\top \mathbf{I}_{\mathbf{h}}(G) \epsilon + O(\|\epsilon\|^3),
\end{equation}
where $\mathbf{I}_{\mathbf{h}}(G) = \mathbb{E}_{y \sim P(Y|\mathbf{h})} \left[ \nabla_{\mathbf{h}} \log P(y|\mathbf{h}) \nabla_{\mathbf{h}} \log P(y|\mathbf{h})^\top \right]$ is the Fisher Information Matrix (FIM).

\textbf{Step 4: Combining Terms.}
Substituting $\epsilon = \alpha(G) \cdot \tilde{\boldsymbol{\delta}}$:
\begin{equation}
    D_{\text{KL}}( T(G) \parallel T^R(G) ) \approx \frac{1}{2} \alpha(G)^2 \tilde{\boldsymbol{\delta}}^\top \mathbf{I}_{\mathbf{h}}(G) \tilde{\boldsymbol{\delta}}.
\end{equation}
Substituting this back into the bounds from Steps~1 and~2 yields the final bound:
\begin{equation}
    Q(C^*, T) \ge \mathbb{E}_{G \sim \mathcal{G}_{ood}} \left[ \frac{1}{2} \alpha(G)^2 \tilde{\boldsymbol{\delta}}^\top \mathbf{I}_{\mathbf{h}}(G) \tilde{\boldsymbol{\delta}} \right] - 2M \cdot \mathbb{TV}(\mathcal{G}_{id}, \mathcal{G}_{ood}).
\end{equation}
\end{proof}

\begin{table*}[h]
\caption{\textbf{Hard-label defense performance on MUTAG \& ENZYMES.} Soft-label results are in Table~\ref{tab:main_results}.}
\small
\centering
\label{tab:clone-acc-1}
\begin{tabular}{lcccccc}
\toprule
\multirow{2}{*}{\textbf{Defense}} & \multicolumn{3}{c}{\textbf{MUTAG} Clone Model Architecture} & \multicolumn{3}{c}{\textbf{ENZYMES} Clone Model Architecture} \\ 
\cmidrule(lr){2-4} \cmidrule(lr){5-7} 
 & GraphSAGE & GIUNET & GIC & GraphSAGE & GIUNET & GIC \\ 
 \midrule
Undefended $\downarrow$ & 0.7346 & 0.8835 & 0.8657 & 0.4874 & 0.6557 & 0.5880 \\
RandP $\downarrow$ & 0.7012 & 0.8087 & 0.7564 & 0.4475 & 0.5821 & 0.5517 \\
P-Poison $\downarrow$ & 0.7089 & 0.8231 & 0.8054 & 0.4682 & 0.5967 & 0.5458 \\
GRAD $\downarrow$ & 0.7120 & 0.8193 & 0.7901 & 0.4626 & 0.5919 & 0.5530 \\
AM $\downarrow$ & 0.6957 & 0.8125 & 0.7627 & 0.4587 & 0.5830 & 0.5462 \\
MeCo $\downarrow$ & 0.6135 & 0.7974 & 0.7456 & 0.4251 & 0.5724 & 0.5407 \\
\textbf{GraphRP (Ours)} $\downarrow$ & \textbf{0.5721} & \textbf{0.7531} & \textbf{0.7238} & \textbf{0.3421} & \textbf{0.5447} & \textbf{0.5223} \\ 
\bottomrule
\end{tabular}
\end{table*}

\begin{table*}[t]
\caption{\textbf{Hard-label defense performance on NCI1 \& PROTEINS.} Soft-label results are in Table~\ref{tab:main_results}.}
\small
\centering
\label{tab:clone-acc-2}
\begin{tabular}{lcccccc}
\toprule
\multirow{2}{*}{\textbf{Defense}} & \multicolumn{3}{c}{\textbf{NCI1} Clone Model Architecture} & \multicolumn{3}{c}{\textbf{PROTEINS} Clone Model Architecture} \\ 
\cmidrule(lr){2-4} \cmidrule(lr){5-7} 
 & GraphSAGE & GIUNET & GIC & GraphSAGE & GIUNET & GIC \\ \midrule
Undefended $\downarrow$ & 0.6073 & 0.7425 & 0.7824 & 0.6951 & 0.7164 & 0.7135 \\
RandP $\downarrow$ & 0.5735 & 0.7144 & 0.7459 & 0.6592 & 0.6455 & 0.6820 \\
P-Poison $\downarrow$ & 0.5752 & 0.7120 & 0.7634 & 0.6536 & 0.6837 & 0.6852 \\
GRAD $\downarrow$ & 0.5731 & 0.7146 & 0.7661 & 0.6492 & 0.6902 & 0.6813 \\
AM $\downarrow$ & 0.5675 & 0.7235 & 0.7653 & 0.6473 & 0.6923 & 0.6793 \\
MeCo $\downarrow$ & 0.5435 & 0.6946 & 0.6837 & 0.5864 & 0.6771 & 0.6527 \\
\midrule
\textbf{GraphRP (Ours)} $\downarrow$ & \textbf{0.5024} & \textbf{0.6137} & \textbf{0.6547} & \textbf{0.5003} & \textbf{0.6287} & \textbf{0.6325} \\  
\bottomrule
\end{tabular}
\end{table*}

\begin{table*}[h]
\caption{Clone model accuracy after applying \textit{adaptive attack} on MUTAG with \text{G\_Inception} as target model.}
\label{tab:adaptive-attack}
\centering
\small
\begin{tabular}{@{}llccc@{}}
\toprule
\multirow{2}{*}{\textbf{Attack}} & \multirow{2}{*}{\textbf{Defense}} & \multicolumn{3}{c}{\textbf{Clone Model Architecture}} \\ \cmidrule(l){3-5} 
 &  & \multicolumn{1}{c}{GraphSAGE} & \multicolumn{1}{c}{GIUNET} & \multicolumn{1}{c}{GIC} \\ 
 \midrule
\multirow{4}{*}{\begin{tabular}[c]{@{}l@{}}Hard-label \\ Attack\end{tabular}}
 & Undefended $\downarrow$ & 0.7651 & 0.9342 & 0.9043 \\
 & \bf GraphRP $\downarrow$ & 0.6032 & 0.7829 & 0.7506 \\
 & \bf GraphRP, Adaptive, unknown architecture $\downarrow$ & \textbf{0.5220} & \textbf{0.6334} & \textbf{0.6101} \\ 
 & \bf GraphRP, Adaptive, known architecture $\downarrow$ & 0.5725 & 0.6672 & 0.6502 \\ 
 \midrule
\multirow{4}{*}{\begin{tabular}[c]{@{}l@{}}Soft-label\\ Attack\end{tabular}}
 & Undefended $\downarrow$ & 0.7346 & 0.8835 & 0.8657 \\
 & \bf GraphRP $\downarrow$ & 0.5721 & 0.7531 & 0.7238 \\
 & \bf GraphRP, Adaptive, unknown architecture $\downarrow$ & \textbf{0.5023} & \textbf{0.5942} & \textbf{0.5731} \\ 
 & \bf GraphRP, Adaptive, known architecture $\downarrow$ & 0.5495  & 0.6247 &  0.5986 \\ 
 \bottomrule
\end{tabular}
\end{table*}

\section{Implementation Details of GraphRP}
\label{sec:appendix::implementation}

\subsection{Structure Extractor $\Phi(G)$}

The structural embedding $\mathbf{z}_G = \Phi(G)$ concatenates three permutation-invariant feature groups: (1) \textbf{Spectral Features} — top-$k$ sorted eigenvalues of the normalized Laplacian $\mathbf{L} = \mathbf{I} - \mathbf{D}^{-1/2}\mathbf{A}\mathbf{D}^{-1/2}$, with zero-padding for small graphs ($n < 50$) and truncation for large ones ($k=8$ for molecular datasets); (2) \textbf{Degree Statistics} — a 4-dimensional vector of $[\min, \max, \text{mean}, \text{std}]$ of node degrees; (3) \textbf{Global Properties} — graph diameter, average clustering coefficient, and density. The final vector is $\ell_2$-normalized: $\mathbf{z}_G \leftarrow \mathbf{z}_G / \|\mathbf{z}_G\|_2$.

\subsection{Prototype Learning}

Prototypes $\{\mathbf{c}_k\}_{k=1}^K$ are initialized via K-Means ($K=5$) on $\mathcal{D}_{id}^{train}$ and updated exclusively via $\mathcal{L}_{\text{struct}}$ in Step~2 of Algorithm~\ref{alg:defense}, independently of $\Theta_{rep}$. The gating factor is:
\begin{equation}
    \alpha(G) = \operatorname{Sigmoid}\left( \gamma \cdot \left(\min_k \|\Phi(G) - \mathbf{c}_k\|_2^2 - \tau\right) \right),
\end{equation}
where $\gamma=10$ and $\tau$ is a trainable threshold initialized at the $95^{\text{th}}$ percentile of benign distances.

\section{Extended Experimental Settings}
\label{sec:appendix::settings}

\subsection{Hardware and Hyperparameters}
All models are implemented in PyTorch $\ge$ 1.12 with PyTorch Geometric. Experiments were conducted on a cluster with NVIDIA RTX A5000 (24GB VRAM) and RTX 6000 Ada (48GB) GPUs.

We use G-Inception (3 layers, 128 dimensional hidden layers) as the target model for biochemical datasets and GraphSAGE (3 layers) for social datasets. All models are optimized using Adam, with the reprogramming noise $\boldsymbol{\delta}$ trained at learning rate $1\times10^{-3}$ and weight decay $5\times10^{-4}$. The query budget is set to $20\times$ the size of the benign training set for hard-label attacks and $10\times$ for soft-label attacks.

The code for our experiments can be accessed at \url{https://github.com/overwenyan/GraphRP-KDD2026/}.

\begin{table}[h]
\caption{Inference Time Comparison.}
\label{tab:inference-time}
\small
\centering
\begin{tabular}{lc} 
\toprule
\textbf{Defense Method} & \textbf{Inference Time (s)} \\ 
\midrule
Undefended     & 52.31  \\
RandP          & 54.72  \\
P-Poison       & 432.17 \\
AM             & 115.57 \\
\textbf{GraphRP (Ours)}  & \textbf{56.42}  \\ 
\bottomrule
\end{tabular}
\end{table}

\subsection{Inference Latency Analysis}
A critical requirement for MLaaS defense is low latency. As shown in Table \ref{tab:inference-time}, GraphRP incurs a marginal overhead of only $\approx 7.9\%$ compared to the Undefended model (56.42s vs. 52.31s). 
This slight increase is primarily due to the spectral decomposition in $\Phi(G)$, which is computed once per graph. 
In contrast, optimization-based methods like \textit{P-Poison} are computationally prohibitive, requiring iterative gradient steps at test time that make them over $8\times$ slower (432.17s) than the baseline. 
GraphRP's efficient forward-pass design ensures it remains compatible with real-time applications.

\section{Extended Robustness and Transferability}
\label{sec:appendix::robustness}

\subsection{Adaptive Attack and Data Leakage}
\textbf{Defending against Adaptive Attackers.} As shown in Tables~\ref{tab:adaptive_attack_app} and~\ref{tab:adaptive-attack}, GraphRP remains robust against a gray-box attacker (using GraphGAN to mimic benign topology \cite{wang2018graphgan}), limiting clone accuracy to $61.5\%$ (soft-label) and $58.8\%$ (hard-label)---a negligible increase ($< 1.5\%$) over the standard attack. Notably, even when the attacker knows the defended architecture, clone accuracy stays below $65\%$.


\subsection{Cross-Architecture Transferability}
\label{sec:appendix::transfer}
As shown in Table~\ref{tab:transferability}, GraphRP is architecture-agnostic: layer-wise reprogramming distorts the data manifold regardless of surrogate capacity, reducing extraction accuracy by over $18\%$ even against a powerful Graph Transformer~\cite{dwivedi2020generalization}.

\begin{table}[t]
\caption{\textbf{Defense Transferability (MUTAG).} Clone accuracy across attacker architectures. GraphRP's poisoning effect holds even against Graph Transformers.}
\label{tab:transferability}
\centering
\small
\begin{tabular}{lccc}
\toprule
\textbf{Victim: G-Inception} & \multicolumn{3}{c}{\textbf{Attacker Architecture}} \\\cmidrule(l){2-4}
Defense Method & GraphSAGE & GAT & Graph Transformer \\ \midrule
Undefended & 0.765 & 0.789 & 0.824 \\
RandP (Baseline) & 0.734 & 0.751 & 0.792 \\
\textbf{GraphRP (Ours)} & \textbf{0.603} & \textbf{0.612} & \textbf{0.641} \\ 
\bottomrule
\end{tabular}
\end{table}